\documentclass[sigconf]{acmart}
\AtBeginDocument{%
  }

\usepackage{booktabs}
\usepackage{colortbl}
\usepackage{dsfont}
\usepackage{amsmath}
\usepackage{subcaption}
\usepackage{multirow}
\usepackage{graphicx}
\usepackage{thm-restate}
\makeatletter
\renewcommand\thmt@autorefsetup{%
  \@xa\def\csname\thmt@envname autorefname\@xa\endcsname\@xa{\thmt@thmname}%
}
\makeatother
\usepackage{enumitem}
\usepackage{algorithm}
\usepackage{algpseudocode}

\newtheorem{definition}{Definition}

\newtheorem{assumption}{Assumption}
\newtheorem{theorem}{Theorem}
\newtheorem{lemma}{Lemma}
\newtheorem{proposition}{Proposition}

\newcommand{\name}{\textbf{GSCM-PFN}}
\newcommand{\SCM}{Joint Feature-Edge-Label SCM}
\newcommand{\scm}{joint feature-edge-label SCM}
\newcommand{\SCMDAG}{Prior-Guided Node-Edge Dependency Modeling}
\newcommand{\scmdag}{prior-guided node-edge dependency modeling}
\newcommand{\Architecture}{Feature-Permutation-Invariant Dual-Path Graph Network}
\newcommand{\architecture}{feature-permutation-invariant dual-path graph network}
\newcommand{\scmm}{graph causal-relation decoupling mechanism}
\newcommand{\gt}{feature-permutation-equivariant graph transformer}

\newcommand{\best}[1]{\textbf{#1}} %
\newcommand{\secbest}[1]{\underline{#1}} %

\setcopyright{acmlicensed}
\copyrightyear{2018}
\acmYear{2018}
\acmDOI{XXXXXXX.XXXXXXX}
\acmConference[Conference acronym 'XX]{Make sure to enter the correct
  conference title from your rights confirmation email}{June 03--05,
  2018}{Woodstock, NY}
\acmISBN{978-1-4503-XXXX-X/2018/06}

\acmSubmissionID{5119}

\begin{document}

\title{Learning a Prior-Data Fitted Network for \\
Graph Foundation Models via Structural Causal Model Priors}

\author{Ziwei Zhang}
\affiliation{
  \institution{Beihang University}
  \city{Haidian District}
  \state{Beijing}
  \country{China}
}
\email{zwzhang@buaa.edu.cn}

\author{Tianyin Liao}
\affiliation{
  \institution{Nankai University}
  \city{Binhai New Area}
  \state{Tianjin}
  \country{China}
}
\email{1120230329@mail.nankai.edu.cn}

\author{Chunyu Hu}
\affiliation{
  \institution{Nankai University}
  \city{Binhai New Area}
  \state{Tianjin}
  \country{China}
}
\email{huchunyu@mail.nankai.edu.cn}

\author{Ge Lan}
\affiliation{
  \institution{Nankai University}
  \city{Binhai New Area}
  \state{Tianjin}
  \country{China}
}
\email{grettelan@mail.nankai.edu.cn}

\author{Nuo Xu}
\affiliation{
  \institution{Tsinghua University}
  \city{Haidian District}
  \state{Beijing}
  \country{China}
}
\email{xunuo0109@hotmail.com}

\author{Yi Huang}
\affiliation{
  \institution{Beihang University}
  \city{Haidian District}
  \state{Beijing}
  \country{China}
}
\email{yihuang@buaa.edu.cn}

\author{Qingyun Sun}
\affiliation{
  \institution{Beihang University}
  \city{Haidian District}
  \state{Beijing}
  \country{China}
}
\email{sunqy@buaa.edu.cn}

\author{Xingxuan Zhang}
\affiliation{
  \institution{Tsinghua University}
  \city{Haidian District}
  \state{Beijing}
  \country{China}
}
\email{xingxuanzhang@hotmail.com}

\author{Peng Cui}
\affiliation{
  \institution{Tsinghua University}
  \city{Haidian District}
  \state{Beijing}
  \country{China}
}
\email{cuip@tsinghua.edu.cn}

\author{Jianxin Li}
\affiliation{
  \institution{Beihang University}
  \city{Haidian District}
  \state{Beijing}
  \country{China}
}
\email{lijx@buaa.edu.cn}

\renewcommand{\shortauthors}{Trovato et al.}

\begin{abstract}
  Graph Foundation Models (GFMs) have recently emerged as a promising paradigm for learning transferable representations across different graphs and domains. However, existing GFMs face notable limitations, including limited pre-training scale or reliance on aligning with textual information in Large Language Models (LLMs). To overcome these limitations, recent studies have drawn inspiration from prior-data fitted networks (PFNs), which learn from large synthetic datasets and support multi-domain prediction backed by causal learning. However, existing graph PFNs encounter severe difficulties in modeling causal relationships within and across node and edge sets, generating diverse graph structures, and designing suitable architectures that can learn from complex synthetic datasets. To address these challenges, we propose \name~(\underline{G}raph \underline{SCM}-based \underline{P}rior-Data \underline{F}itted \underline{N}etwork), a graph PFN pre-trained on synthetic data generated from an SCM-based prior that jointly models nodes and graph topology. Specifically, we first design a \scm~to guide the generation of nodes and edges in synthetic graphs, enabling the PFN to capture causal relationships within and across node and edge sets. Next, we introduce a \scmdag~module that instantiates diverse synthetic graph generation strategies. We prove that our SCM-based graph generation strategy is more expressive than existing priors. Finally, we propose a \architecture~to learn from diverse synthetic graph data with theoretical guarantees. Experiments on 23 real-world benchmarks show that \name~outperforms both conventional GFMs and PFN-based GFMs, without utilizing real-world graph data for pre-training. Our code is available at \url{https://anonymous.4open.science/r/GSCM-PFN-A3A3/}.
\end{abstract}

\begin{CCSXML}
<ccs2012>
   <concept>
       <concept_id>10002950.10003624.10003633.10010917</concept_id>
       <concept_desc>Mathematics of computing~Graph algorithms</concept_desc>
       <concept_significance>500</concept_significance>
       </concept>
   <concept>
       <concept_id>10010147.10010257.10010293.10010294</concept_id>
       <concept_desc>Computing methodologies~Neural networks</concept_desc>
       <concept_significance>500</concept_significance>
       </concept>
 </ccs2012>
\end{CCSXML}

\ccsdesc[500]{Mathematics of computing~Graph algorithms}
\ccsdesc[500]{Computing methodologies~Neural networks}

\keywords{Graph Foundation Model, Prior-Data Fitted Network, Graph Machine Learning}

\maketitle

\section{Introduction}\label{sec:intro}
Graph Foundation Models (GFMs)~\cite{liu2025graph,zhao2025graphany,finkelshtein2026equivariance} have recently emerged as a promising paradigm for learning transferable representations across different graphs and domains. Unlike graph neural networks (GNNs)~\cite{velickovic2018graph,shchur2018pitfalls}, which are often trained for specific tasks or domains, GFMs are pre-trained on diverse graph data and can be adapted to a broad range of downstream problems. This pre-training then adapting paradigm offers a unified and flexible approach to universal graph learning, with applications including molecular property prediction~\cite{zhao2023gimlet} and knowledge graph reasoning~\cite{galkin2024towards}. On the Web, GFMs can be applied to downstream tasks such as social network analysis~\cite{zhu2025graphclip}, recommender systems~\cite{gong2023unified,wang2025graph}, knowledge graph mining~\cite{10.1145/2488388.2488425}, etc.

However, existing GFMs face notable limitations. For example, GNN-based GFMs~\cite{zhao2025graphany,finkelshtein2026equivariance} depend heavily on large-scale pre-training data from diverse graph domains~\cite{liu2025graph}, yet such data may be insufficient to support generalization across domains and tasks. Meanwhile, LLM-based GFMs~\cite{liuone,chen2024exploring} often rely on aligning with textual information, which can lead to inconsistent data priors when applied to graphs with few textual attributes~\cite{eremeev2026turning,eremeev2026graphpfn,liao2026tfmlinker}.

To overcome these limitations, recent studies~\cite{eremeev2026turning,hu2026surprisingly,eremeev2026graphpfn} have drawn inspiration from the success of prior-data fitted networks (PFNs)~\cite{hollmanntabpfn, zhang2025limix}, which achieve multi-domain prediction across diverse tabular datasets. PFNs, such as TabPFN~\cite{hollmann2025accurate} and LimiX-2~\cite{zhang2026limix}, are pre-trained on diverse synthetic datasets to learn posterior predictive distributions induced by a structural causal model (SCM) prior, thereby enabling effective in-context learning, particularly for numerical attributes. One line of work~\cite{liao2026tfmlinker,gao2026kgpfn} has adapted PFNs pre-trained in the tabular domain to graph foundation models, as in G2T-FM~\cite{eremeev2026turning} and TAG~\cite{hayler2025bringing}. Although these models have achieved reasonable performance, they still rely heavily on hand-crafted features and lack large-scale pre-training on diverse graph data, limiting their ability to capture complex graph patterns~\cite{eremeev2026graphpfn}. 

Another line of work~\cite{eremeev2026graphpfn,choi2026nodepfn} develops graph-specific priors for multi-domain node classification that natively incorporate topology, as in GraphPFN~\cite{eremeev2026graphpfn} and NodePFN~\cite{choi2026nodepfn}. However, they face three unresolved major challenges. First, although both models use SCM priors to generate node features, they rely on the Erd\H{o}s--R\'enyi (ER) model and methods in the stochastic block model (SBM) family to generate graph topology~\cite{erdds1959random,holland1983stochastic,karrer2011stochastic,binkiewicz2017covariate}, rather than on a unified SCM that jointly models causal dependencies within node and edge sets and between them. Such an SCM could provide a more principled basis for generalization, since causal dependencies both exist within node and edge sets and between them~\cite{bramoulle2009identification,centola2010spread,steglich2010dynamic,leskovec2008microscopic}. For example, node attributes can affect other nodes and give rise to edges, while existing edges can alter node attributes and lead to further edges.

Second, generating diverse synthetic graph data is more challenging than generating tabular data, since graph topologies can exhibit complex dependencies among edges. The existing methods rely on ER-based and SBM-family priors, which do not explicitly capture several such dependencies, including triadic closure~\cite{wang2024neural}, degree assortativity~\cite{newman2002assortative}, and continuous core-periphery structure~\cite{borgatti2000models}. The synthetic graphs therefore cover a narrower range of topologies.

Lastly, designing an architecture that can learn from such complex synthetic datasets is also a challenge. These datasets combine heterogeneous feature semantics which differ across graphs with topological dependencies that vary widely in range. NodePFN does not guarantee feature-permutation invariance, so it fails to model inputs whose feature dimensions do not share a fixed meaning. GraphPFN augments the tabular LimiX model by stacking message-passing layers in series, but this design cannot capture complex and diverse graph structures.

To address these challenges, we propose \name~(\underline{G}raph \underline{SCM}-based \underline{P}rior-Data \underline{F}itted \underline{N}etwork), a PFN-based multi-domain graph foundation model. Specifically, \name~is pre-trained on synthetic data generated from an SCM-based prior that jointly models nodes and graph topology through three tailored modules to address the challenges outlined above. First, we design a \scm~to coordinate the generation of nodes and edges in synthetic graphs. Through a \scmm, the SCM lets the model capture causal relationships within the node set and the edge set, as well as between them. Next, we introduce a \scmdag~module that carries out the synthetic graph generation process specified by the SCM. This module yields diverse generation strategies by composing multiple layers, which exchange information between nodes and edges at the sample level. We prove that this strategy is strictly more expressive than the ER and SBM priors used in existing methods. Finally, we propose a \architecture~for learning on synthetic graphs sampled from complex SCM priors. The network combines a provably feature-permutation-invariant design with graph transformer layers, which makes it suitable for learning from diverse synthetic graph data with heterogeneous feature semantics. We evaluate our method on real-world benchmarks and the results show that \name~outperforms both conventional GFMs and PFN-based GFMs, without using real-world graph data for pre-training. 

Our contributions are summarized as follows:
\begin{itemize}[leftmargin=1.5em]
    \item We propose \name, a PFN-based multi-domain graph foundation model pre-trained on causally inspired synthetic graphs instead of real-world graph data.
    \item We develop a \scm~to guide the \scmdag~module for diverse graph generation, modeling causal dependencies within and between the node and edge sets. We prove that our strategy is more expressive than existing ER and Bernoulli SBM priors.
    \item We design a \architecture~that handles heterogeneous feature semantics and captures graph topology with theoretical guarantees. %
    \item \name~outperforms both conventional GFMs and PFN-based GFMs on 23 real-world benchmarks, while only utilizing synthetic data during pre-training.
\end{itemize}

\section{Preliminaries and Problem Formulation}\label{sec:problem}
Given a graph $\mathcal{G}=(\mathcal{V},\mathbf{A},\mathbf{X},\mathbf{y}_\mathrm{train})$, where $\mathcal{V}$ is the node set, $\mathbf{A}\in\{0,1\}^{|\mathcal{V}|\times|\mathcal{V}|}$ is the adjacency matrix, $\mathbf{X}\in\mathbb{R}^{|\mathcal{V}|\times d}$ is the node feature matrix with feature dimension $d$, and $\mathbf{y}_\mathrm{train}$ is a node-indexed vector of observed labels whose observed entries identify the labeled nodes $\mathcal{V}_\mathrm{train}\subset\mathcal{V}$, the goal of node classification is to predict the test-label vector $\mathbf{y}_\mathrm{test}$, whose entries are the labels of the remaining nodes $\mathcal{V}_\mathrm{test}=\mathcal{V}\setminus\mathcal{V}_\mathrm{train}$.

Following~\cite{nagler2023statistical}, we view the unknown distribution $p_0$ of a fully labeled graph as one draw from a prior $\Pi$. Sampling from this prior proceeds by drawing $p\sim\Pi$ and then drawing the fully labeled graph from $p$. A subset of these labels is observed and collected in $\mathbf{y}_{\mathrm{train}}$. The remaining labels form $\mathbf{y}_{\mathrm{test}}$. The posterior predictive distribution (PPD) is the conditional distribution of $\mathbf{y}_{\mathrm{test}}$ given $\mathcal{G}$ under this sampling scheme.

A prior-data fitted network~\cite{eremeev2026graphpfn,choi2026nodepfn} for node classification is a parametric conditional distribution $f_{\theta}(\mathbf{y}_{\mathrm{test}}\mid\mathcal{G})$ trained to approximate the PPD. The parameters minimize
\begin{equation}\label{eq:loss}
\mathcal{L}(\theta)=\mathbb{E}_{\Pi}\bigl[-\log f_{\theta}(\mathbf{y}_{\mathrm{test}}\mid\mathcal{G})\bigr],
\end{equation}
where the expectation follows the sampling scheme above. Nodes in $\mathcal{V}_{\mathrm{train}}$ are the context, and nodes in $\mathcal{V}_{\mathrm{test}}$ are the queries. At test time, $f_{\theta}(\cdot\mid\mathcal{G})$ is evaluated on a new graph in one forward pass, without gradient updates.

\section{Method}\label{sec:method}

\begin{figure*}
    \centering
    \includegraphics[width=\textwidth]{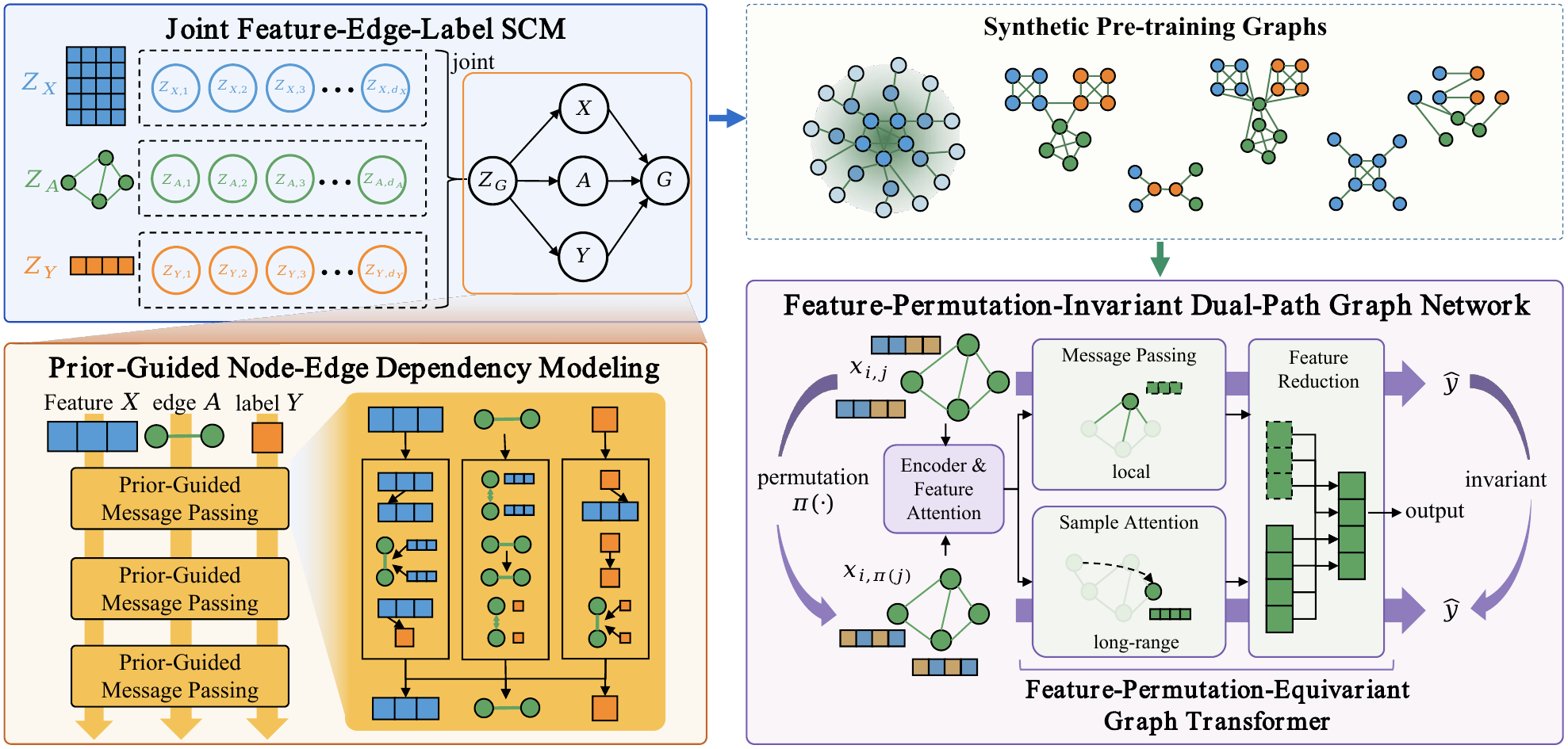}
    \caption{The framework of \name. Our proposed method is composed of three modules: (1) the \scm, which defines a causal prior over node features, graph topology, and labels; (2) the \scmdag~module, which generates graphs by updating node–edge variables at the sample level and individual dimensions at the feature level; and (3) the \architecture, which performs in-context prediction while addressing heterogeneous feature semantics and topological dependencies that vary in range.}
    \label{fig:framework}
\end{figure*}

In this section, we introduce \name~and its three components, as shown in Figure~\ref{fig:framework}. First, the \scm~defines a joint causal prior over node features, graph topology, and labels. Next, the \scmdag~module instantiates diverse graph generation strategies through multilayer causal updates of features, labels, and edges. Finally, the \architecture~supports in-context node classification.

\subsection{\SCM}

The module is designed to guide the joint generation of nodes and edges in synthetic graphs. Unlike the sequential graph generation strategies used in NodePFN~\cite{choi2026nodepfn} and GraphPFN~\cite{eremeev2026graphpfn}, which generate node features before edges or vice versa, our approach uses a graph causal mechanism to generate nodes and edges simultaneously.

Our method first defines graph-level latent variables for the generation mechanism. Specifically, we use a two-stage decoupling strategy to decompose the abstract graph latent variable into components that can be instantiated by the generation algorithm. First, considering node and edge information have distinct semantics, we separate the node-related latent variables from the edge-related latent variables and represent the edge-related variables as a vector of scalar components $Z_A=(Z_{A,1},\ldots,Z_{A,d_A})$, where $d_A$ denotes the number of scalar components in this group. Second, coupling node features with labels can weaken the causal relations between labels and other graph elements, particularly in a complex graph-generating SCM. Tabular PFNs primarily model causal dependencies between features and labels under an i.i.d. sample assumption. Graph generation, by contrast, must also capture dependencies within and across node and edge sets, resulting in a more complex SCM with potentially multi-hop causal pathways. This complexity makes it difficult to preserve the causal relationships between labels and other graph elements, particularly when label variables are coupled with a much larger set of node features. We therefore further decompose the node-related latent variables into node-feature latent variables $Z_X=(Z_{X,1},\ldots,Z_{X,d_X})$ and node-label latent variables $Z_Y=(Z_{Y,1},\ldots,Z_{Y,d_Y})$ where $d_X$ and $d_Y$ denote the numbers of scalar components in their respective groups.

Building on this two-stage decoupling, we next introduce the relations among the latent variables. The node-feature latent variables $Z_X$, the node-label latent variables $Z_Y$, and the edge latent variables $Z_A$ are collected into the graph-level latent variables $Z_G=(Z_X,Z_Y,Z_A)$. This latent variables determine the node features, the node labels, and the adjacency matrix:
\begin{equation}
    Z_G\rightarrow X, Z_G\rightarrow Y, Z_G\rightarrow A.
\end{equation}
These variables in turn determine the synthetic graph $G$:
\begin{equation}
    X\rightarrow G, Y\rightarrow G, A\rightarrow G.
\end{equation}

Then, we describe how the SCM samples a synthetic graph. First, we randomly instantiate the priors and draw initial values:
\begin{equation}\label{eq:initial}
    Z_G^0=(Z_X^0,Z_Y^0,Z_A^0).
\end{equation}
Existing PFN methods widely use a DAG to instantiate the causal relations between latent and observable variables. These initial values therefore serve only as the starting points for a DAG and do not constitute the complete set of latent variables used in the generation process. Second, the DAG propagates these inputs through its causal functions to generate additional variables:
\begin{equation}\label{eq:dag}
    V_G=\mathrm{DAG}(Z_G^0),
\end{equation}
where $\mathrm{DAG}(\cdot)$ denotes the composition of the causal functions along the DAG. These variables $V_G$ may represent additional latent variables or observable node features $X$, node labels $Y$, and the adjacency matrix $A$, depending on the outcome of the subsequent sampling step. During this process, the DAG is capable of capturing both causal relationships among variables within the same latent group, such as the relationship between $Z_{X,1}$ and $Z_{X,3}$, and relationships within and across the node and edge sets. Third, following the sampling protocols used in existing PFNs~\cite{eremeev2026graphpfn,choi2026nodepfn}, we randomly designate a subset of DAG-generated variables as the observed variables, while treating the remaining variables as latent. Together, the observed variables form the synthetic graph $G$.

\paragraph{Discussion}
Here, we explain how our SCM unifies and extends the SCM priors used by PFNs in both the tabular and graph domains. For tabular data~\cite{hollmanntabpfn}, our SCM reduces to a tabular SCM by omitting the latent groups $Z_Y$ and $Z_A$, along with the adjacency matrix $A$. For graph data, we consider GraphPFN~\cite{eremeev2026graphpfn} and NodePFN~\cite{choi2026nodepfn}. GraphPFN first generates graph topology from an SBM-family prior and then generates $X$ and $Y$ conditioned on the topology and $Z_X^0$. It can therefore be viewed as a special case with no separate latent group $Z_Y$ and with $Z_A$ corresponding directly to the adjacency matrix $A$. On the other hand, NodePFN first generates $X$ and $Y$ using a tabular SCM prior, then generates community-based graph topology conditioned on $Y$ using an SBM-family prior. It can be viewed as a special case with no separate latent groups $Z_Y$ and $Z_A$, in which $Z_X$ generates the entire graph $G$.

\subsection{\SCMDAG}
The SCM provides theoretical guidance for the synthetic graphs. This module further instantiates that guidance through diverse synthetic graph generation strategies. Consistent with Eq.~\eqref{eq:initial} and Eq.~\eqref{eq:dag}, the module is designed to serve as the DAG, which takes the initial latent state $Z_G^0$ as input and jointly generates variables governing node features, node labels, and edges. However, realizing this goal meets two challenges. First, causal dependencies between nodes in real-world graphs may span multiple hops, so one node can affect another that, in turn, influences more distant nodes. Second, existing graph PFNs, including GraphPFN and NodePFN, rely on SBM-family priors and fail to capture dependencies among edges beyond these factors. For example, link prediction methods often exploit triadic closure, whereby two nodes sharing a neighbor are more likely to be connected~\cite{wang2024neural}. Complex networks also show degree assortativity, whereby high-degree nodes preferentially connect to other high-degree nodes or to low-degree nodes~\cite{newman2002assortative}, and continuous core-periphery structure, whereby nodes have graded coreness and connections become denser toward the core~\cite{borgatti2000models}.

To address these challenges, we propose prior-guided message passing layers that jointly update the node- and edge-related variables. Multi-hop causal effects and dependencies among edges then evolve together during generation rather than the topology staying fixed given the community assignments. Let $V_G^\ell=(V_X^\ell,V_Y^\ell,V_A^\ell)$ denote the state at the $\ell$-th layer, where $V_X^\ell$, $V_Y^\ell$, and $V_A^\ell$ denote the feature-, label-, and edge-related variables, respectively, initialized as $V_G^0=Z_G^0=(Z_X^0,Z_Y^0,Z_A^0)$. Their status as observed or latent variables is determined by the sampling step that follows DAG propagation. The joint layerwise update is given by
\begin{equation}\label{eq:dag-layer}
(V_X^{\ell+1}, V_Y^{\ell+1}, V_A^{\ell+1}) = g(V_X^\ell, V_Y^\ell, V_A^\ell), 
\end{equation}
where $\ell$ denotes the layer index and $g(\cdot)$ maps the variable tuple at the $\ell$-th layer to the ($\ell+1$)-th layer.

After defining the layerwise update function $g(\cdot)$, we aim to capture sample-level causal relationships both within and across the node and edge sets, building on the second step of the \scm. To this end, we decompose $g(\cdot)$ into the following three functions: 
\begin{equation}\label{eq:layer-maps}
\begin{aligned}
V_X^{\ell+1} = g_X(V_X^\ell, V_Y^\ell, V_A^\ell),\\
V_Y^{\ell+1} = g_Y(V_X^\ell, V_Y^\ell, V_A^\ell),\\
V_A^{\ell+1} = g_A(V_X^\ell, V_Y^\ell, V_A^\ell),
\end{aligned} 
\end{equation}
where $g_X(\cdot)$ combines feature-to-feature and label-to-feature linear transformations with either standard or degree-based neighbor aggregation. $g_Y(\cdot)$ combines a feature-to-label linear transformation, a label residual, and label propagation with either standard or degree-based aggregation. $g_A(\cdot)$ combines a feature-similarity kernel, a label-similarity kernel or a Chung-Lu mechanism~\cite{chung2002connected}, and an adjacency residual. At each layer, all three variable groups are jointly updated based on the groups from the preceding layer, allowing the module to model these sample-level dependencies. The linear transformations in $g(\cdot)$, as in the random MLPs used in tabular settings~\cite{hollmanntabpfn}, allow the module to model causal relationships at the level of individual dimensions, complementing the sample-level dependencies captured by the joint updates.

At each layer, we randomly sample an activation function to introduce nonlinearity and inject random Gaussian noise into all three groups to improve the realism of the generated data. Finally, after the module completes its propagation in Eq.~\eqref{eq:dag}, we designate a subset of the generated variables $V_G=(V_X,V_Y,V_A)$ as the observables $X$, $Y$, and $A$:
\begin{equation}
    X \subseteq V_X,Y \subseteq V_Y,A \subseteq V_A,
\end{equation}
while retaining the remaining variables as latent, following the sampling protocol used in existing PFNs~\cite{eremeev2026graphpfn,choi2026nodepfn}. The pseudocode of this module is provided in Appendix~\ref{sec:pseudocode}. Although we initialize $V_G^0$ from a prior combining ER and Bernoulli SBM, our SCM-based graph generation strategy is more expressive than existing ER and Bernoulli SBM priors as follows.
\begin{theorem}[Graph Generation Strategy Expressiveness]
\label{thm:graphs}
Let $\mathcal{P}_{\mathrm{SCM}}$ be the set of distributions $p$ of graphs realizable by our SCM graph generation strategy, and let $\mathcal{P}_{\mathrm{ER}}$ and $\mathcal{P}_{\mathrm{SBM}}$ be the sets realizable by ER and Bernoulli SBM priors, respectively. Under mild assumptions, the SCM-based graph generation strategy is more expressive than the ER and Bernoulli SBM priors:
\begin{equation*}
\mathcal{P}_{\mathrm{ER}}\cup\mathcal{P}_{\mathrm{SBM}}\subsetneq\mathcal{P}_{\mathrm{SCM}}.
\end{equation*}
\end{theorem}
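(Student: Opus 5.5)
The proof has two parts: inclusion, then strictness. The inclusion rests on the fact that the generator is initialized from $V_A^0 = Z_A^0$, drawn from a prior that combines ER and Bernoulli SBM, and that $g_A$ contains an adjacency residual. The ``mild assumptions'' I would make explicit are:
\begin{itemize}[leftmargin=1.5em]
\item the number of layers $L$ may be $0$, or the coefficients of the feature-similarity, label-similarity and Chung--Lu terms in $g_A$ may be $0$ while the residual coefficient is $1$;
\item the injected Gaussian noise scale may be $0$, or the noise does not flip binarized edges;
\item the final thresholding or binarization of $V_A$ into $A$ preserves a $\{0,1\}$ input;
\item $A$ is taken to be the full adjacency block of $V_A$ on the observed nodes.
\end{itemize}

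\textbf{Inclusion.} Take any $p \in \mathcal{P}_{\mathrm{ER}}$ with edge probability $q$. Choose the ER component of the initialization prior with parameter $q$ and switch off every non-residual term. Then $A = V_A^L = V_A^0$ exactly, so the SCM realizes $p$. The same argument works for $p \in \mathcal{P}_{\mathrm{SBM}}$ with block assignment distribution $\pi$ and connectivity matrix $B$. The SBM component is selected with these parameters and propagated unchanged. Since ER is the one-block SBM, it suffices to treat the SBM case. This gives $\mathcal{P}_{\mathrm{ER}} \cup \mathcal{P}_{\mathrm{SBM}} \subseteq \mathcal{P}_{\mathrm{SCM}}$.

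\textbf{Strictness.} I will exhibit one distribution in $\mathcal{P}_{\mathrm{SCM}}$ that is not in any Bernoulli SBM. The key structural fact is that, under a Bernoulli SBM, edges are conditionally independent given the block labels. Every SBM distribution is therefore a mixture over labels of product measures with probabilities $B_{z_i z_j}$.

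The candidate is the Chung--Lu branch of $g_A$. Start from the empty graph, or an ER graph with $q = 0$, and latent node weights $w_i$ taken from a continuous coordinate of $V_X$ with a non-atomic distribution, for instance Gaussian noise passed through a positive activation. One layer then gives edges independently with probability $\propto w_i w_j$. This is a graphon model with graphon $W(x,y) = \phi(x)\phi(y)$ that is not a.e. equal to any step function. Alternatively, the triadic-closure effect of a second layer that uses neighbor aggregation can serve the same role.

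\textbf{Separation.} For a fixed number of nodes $n$, the distinction must be made at finite $n$, because an SBM with $n$ blocks can assign each node its own block. So I would either fix the number of blocks $K$ in the SBM class, or compare as $n$ varies under a fixed prior. In the fixed-$K$ setting I can compare degree distributions or a moment statistic. Under a $K$-block SBM, expected degrees conditional on blocks take at most $K$ values. Under the Chung--Lu construction they take a continuum of values. For large $n$, concentration then makes the empirical degree distribution of the SCM graph spread continuously, while the SBM's clusters around at most $K$ points, so the two laws differ.

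A cleaner choice is a single small finite-$n$ witness. Take $n = 3$ with exchangeable labels and compare the joint law of the three edges. The goal would be a violation of the correlation structure that any SBM mixture must satisfy. This is hard, because mixtures are flexible.

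\textbf{Main obstacle.} The hardest step is pinning down what $\mathcal{P}_{\mathrm{SBM}}$ is, specifically whether $K$ is fixed or bounded relative to $n$. The separating argument depends on that choice. I expect to rely on the ``mild assumptions'' to fix $K$ independently of $n$ and to prove strictness asymptotically via the continuum of expected degrees. Failing that, I would use the triadic-closure construction, where $P(A_{ij}=1 \mid A_{ik}=A_{jk}=1)$ exceeds what conditional independence given $K$ blocks permits at a fixed density.
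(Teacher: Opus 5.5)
Your inclusion step is the paper's Lemma~\ref{lem:identity}. The side conditions you list (no noise, an activation and readout that fix a $\{0,1\}$ adjacency) are genuinely needed, and the paper skips them. Algorithm~\ref{alg:prior-guided-node-edge} applies a random activation and Gaussian noise at every layer. The readout described after the proof also keeps a fixed number of edges, which no ER law with $0<q<1$ does.

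For strictness you take a different route. The paper never decides what $\mathcal{P}_{\mathrm{SBM}}$ is as a set of laws. Definition~\ref{def:block-indep} ties membership to the generator's own labels $b$. Assumptions~\ref{ass:A}--\ref{ass:D} then contradict the zero conditional covariance of Proposition~\ref{prop:block-consequences}; these are labels upstream of edges, the path $E^{(\ell)}_{uv}\to(V^{\ell+1}_{X})_{u}\to E^{(\ell+2)}_{uw}$ through $g_X$ then $g_A$, a non-vanishing derivative, and $\operatorname{Cov}(E_{uv},E_{uw}\mid b)\neq 0$ after readout. That works at every $n\ge 3$ with no bound on $K$. However, it only shows the mechanism is not an SBM with those particular labels, and Assumption~\ref{ass:D} carries most of the content.

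You aim at separating laws, and you are right that some restriction is then needed. With unbounded $K$ and an arbitrary label law, every law $p$ on $n$-node graphs is an SBM: take blocks $(G,u)$, set $B_{(G,u),(G,v)}=A^{G}_{uv}$, draw $G\sim p$, and put $b_u=(G,u)$.

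Your fixed-$K$ degree-concentration argument works for dense witnesses and $n$ large relative to $K$. You can make it exact at finite $n$ and drop the asymptotics:
\begin{itemize}
\item The law of $G(n,W)$ determines the star densities $\int d_W^{\,r}$ for $r\le n-1$.
\item An SBM with i.i.d.\ labels and $K$ blocks has a degree function with at most $K$ values, hence a singular $(K+1)\times(K+1)$ Hankel moment matrix.
\item $W(x,y)=\phi(x)\phi(y)$ with $\phi(U)$ non-atomic gives a positive definite Hankel matrix, so the two laws differ whenever $n\ge 2K+1$.
\item Fixed-label SBMs, including ER, are product measures. They are excluded at every $n\ge 3$ by the covariance $\mathbb{E}[w]^2\operatorname{Var}(w)>0$ of two edges sharing a node.
\end{itemize}
This presumes, as you do, that the Chung--Lu branch emits edges conditionally independently given i.i.d.\ weights, which the paper's fixed-edge-count readout does not literally do.

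So your route gives a true separation of laws, at the cost of restricting the SBM class to i.i.d.\ or fixed labels and $K\le (n-1)/2$. The paper's route needs no such restriction but proves only a mechanism-level statement.

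Drop the triadic-closure fallback. Take a two-block SBM with a clique on a $\sqrt{\rho}$ fraction of the nodes. It has edge density $\rho$ and $P(A_{ij}=1\mid A_{ik}=A_{jk}=1)=1$, so closure statistics cannot separate anything from $K$-block SBMs once $K\ge 2$.
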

The proof is provided in Appendix~\ref{subsec:edge-independence}.

\subsection{\Architecture}\label{subsec:architecture}
Lastly, we design a backbone architecture to learn from the synthetic graphs while addressing two challenges. First, feature semantics are heterogeneous across both synthetic and real-world graphs, and the meaning of a given feature dimension may differ from one graph to another. These heterogeneous feature semantics limit the direct use of standard graph architectures such as GNNs~\cite{shchur2018pitfalls,velickovic2018graph} and graph Transformers~\cite{rampavsek2022recipe,chennagphormer}. They rely on order-sensitive inter-feature information exchange, implicitly assuming that a given feature dimension has a consistent meaning across graphs~\cite{lee2026view}. This assumption keeps them from being feature-permutation invariant, thereby making them ill-equipped to generalize across arbitrary graphs with varying features~\cite{finkelshtein2026equivariance}. Second, graphs vary in the range and strength of their topological dependencies. Some exhibit long-range dependencies that benefit from deeper message passing, whereas others involve weak neighbor influence, making extensive message passing unnecessary.

Inspired by LimiX~\cite{zhang2025limix}, our architecture addresses heterogeneous feature semantics across graphs with encoders for node features and labels, followed by feature-level and sample-level attention. Each node is encoded as $d$ feature embeddings and one label embedding, collected in $\mathbf{H}\in\mathbb{R}^{n\times(d+1)\times d_h}$. Feature-level attention acts independently on each node, 
\begin{equation}\label{eq:feature-attn}
\mathrm{FeatureAttn}(\mathbf{H}_i)=\mathrm{Attn}(\mathbf{H}_i,\mathbf{H}_i,\mathbf{H}_i),
\end{equation}
where $\mathbf{H}_i\in\mathbb{R}^{(d+1)\times d_h}$ and $\operatorname{Attn}(\cdot,\cdot,\cdot)$ denotes the attention function. For feature position $j$, let $\mathbf{S}=\mathbf{H}_{1:k,j}$ and $\mathbf{R}=\mathbf{H}_{k+1:n,j}$ denote the context and query embeddings. Sample-level attention is 
\begin{equation}\label{eq:sample-attn}
\mathrm{SampleAttn}(\mathbf{H}{:,j}) = \mathrm{Concat}\bigl( \mathrm{Attn}(\mathbf{S},\mathbf{S},\mathbf{S}),\mathrm{Attn}(\mathbf{R},\mathbf{S},\mathbf{S}) \bigr),
\end{equation}
where $\mathrm{Concat}(\cdot,\cdot)$ joins the two blocks along the node axis. The encoders and feature-level attention exchange information equivariantly across features. Sample-level attention maps context-label representations to the query-label representations used for prediction, which makes the network invariant to feature permutations.

To accommodate these varying ranges of topological dependencies, we extend the sample-level attention layer into a \gt~layer. Each \gt~layer runs an attention pathway and a message-passing pathway as two branches, then integrates them by feature cross-reduction. The attention pathway applies sample-level attention on every dimension:
\begin{equation}\label{eq:sa}
(\mathbf{H}_{i}^{\mathrm{sa}})_{j}
=
\mathrm{SampleAttn}(\mathbf{H}_{:,j})_{i},
\quad j=1,\ldots,d+1.
\end{equation}
The message-passing pathway propagates each feature along graph edges and does not read the label dimension:
\begin{equation}\label{eq:mp}
(\mathbf{H}_{i}^{\mathrm{mp}})_{j}
=
\mathrm{MP}(\mathbf{H}_{:,j},\mathbf{A})_{i},
\quad j=1,\ldots,d,
\end{equation}
where $\mathrm{MP}(\cdot)$ denotes the message-passing layer and is shared across features. Feature cross-reduction integrates the two pathways without changing the feature width:
\begin{equation}\label{eq:reduction}
\mathrm{FeatureReduction}(\mathbf{H}_i^{\mathrm{sa}},\mathbf{H}_i^{\mathrm{mp}})=\mathrm{Attn}(\mathbf{H}_i^{\mathrm{sa}},\mathbf{H}_i^{\mathrm{gt}},\mathbf{H}_i^{\mathrm{gt}}),
\end{equation}
where $\mathbf{H}_i^{\mathrm{gt}}$ is the feature-wise concatenation of the attention pathway $\mathbf{H}_i^{\mathrm{sa}}$ and the message-passing pathway $\mathbf{H}_i^{\mathrm{mp}}$. We use a dual-path message-passing and attention architecture rather than a sequential one such as GraphPFN because message-passing layers are limited by over-smoothing, over-squashing, and low expressivity with respect to the WL test~\cite{alon2021on,topping2022understanding}, and they may irreparably fail to preserve some information~\cite{rampavsek2022recipe}. Feature cross-reduction can leverage attention to capture information that is more useful for prediction from these two pathways.

We now summarize the complete architecture $f_\theta$. The model first encodes node features and observed labels, then processes the representations with a stack of blocks. Each block applies two feature-level attention stages in sequence, followed by a \gt. After the final block, an MLP classifier maps the query-label representations to class predictions. We show that the resulting architecture $f_\theta$ is invariant to permutations of the feature dimensions as follows. 
\begin{restatable}[Feature Permutation Invariance]{theorem}{featureinvariance}
\label{thm:feature-invariance}
Let $\pi$ be a permutation of the feature indices $\{1,\ldots,d\}$. 
Denote $\mathcal{G}_{\pi}=(\mathcal{V},\mathbf{A},\pi(\mathbf{X}),\mathbf{y}_{\mathrm{train}})$.
Then we have:
\begin{equation}
    f_{\theta}(\mathbf{y}_{\mathrm{test}}\mid\mathcal{G}_{\pi})=f_{\theta}(\mathbf{y}_{\mathrm{test}}\mid\mathcal{G}).
\end{equation}
\end{restatable}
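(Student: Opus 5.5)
The plan is to prove the stronger statement that every intermediate representation in $f_\theta$ is \emph{equivariant} along the feature axis: permuting the input features permutes the first $d$ feature slots of $\mathbf{H}$ in the same way and leaves the label slot $d+1$ unchanged. Invariance of the output then follows because the classifier reads only the label slot. Write $\Pi_\pi$ for the action on $\mathbf{H}\in\mathbb{R}^{n\times(d+1)\times d_h}$ that sends slot $j$ to $\pi(j)$ for $j\le d$ and fixes slot $d+1$. The proof goes by induction over the layers. At each layer $T$ we show $T(\Pi_\pi\mathbf{H})=\Pi_\pi T(\mathbf{H})$, using that compositions of equivariant maps are equivariant.

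The steps, in order, are as follows.

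\emph{Encoder.} The feature encoder is applied with shared weights to each scalar $X_{ij}$, so the embedding in slot $j$ depends only on column $j$. Hence $\mathrm{Enc}(\pi(\mathbf{X}))=\Pi_\pi\mathrm{Enc}(\mathbf{X})$. The label embedding depends only on $\mathbf{y}_{\mathrm{train}}$ and is therefore unchanged. If the encoder uses per-column normalization, it is still column-wise and so still commutes with $\pi$. We assume, as the paper implies, that no fixed positional embeddings are attached to feature slots.

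\emph{Feature-level attention} (Eq.~\eqref{eq:feature-attn}). This is self-attention without positional encoding over the $d+1$ tokens of $\mathbf{H}_i$, together with row-wise LayerNorm and MLP. Standard self-attention is permutation-equivariant in its tokens: permuting the rows of $\mathbf{H}_i$ by $P$ turns $\mathrm{softmax}(QK^\top)V$ into $P\,\mathrm{softmax}(QK^\top)P^\top P V$. Since $\Pi_\pi$ is a permutation of tokens fixing the label token, the layer is equivariant under $\Pi_\pi$.

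\emph{Sample-level attention and message passing} (Eqs.~\eqref{eq:sa}--\eqref{eq:mp}). Both act slot-wise with weights shared across $j$, so the output in slot $j$ depends only on input slot $j$. Consequently $\mathbf{H}^{\mathrm{sa}}$ is equivariant, and $\mathbf{H}^{\mathrm{mp}}$ is equivariant on its $d$ feature slots.

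\emph{Feature cross-reduction} (Eq.~\eqref{eq:reduction}). The queries are the $d+1$ tokens of $\mathbf{H}^{\mathrm{sa}}_i$. The keys and values are the $2d+1$ tokens of the concatenation $\mathbf{H}^{\mathrm{gt}}_i$. Under $\pi$, the queries are permuted by $\Pi_\pi$, and the key/value set is permuted by the block permutation $\pi\oplus\mathrm{id}\oplus\pi$. Cross-attention without positional encodings is equivariant in its queries and invariant to jointly permuting keys and values, since $\mathrm{softmax}(QK^\top P^\top)PV=\mathrm{softmax}(QK^\top)V$ with the softmax taken row-wise. So the output is again $\Pi_\pi$-equivariant. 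Residual connections and normalization preserve equivariance as well.

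Composing the blocks, the final $\mathbf{H}$ satisfies $\mathbf{H}(\mathcal{G}_\pi)=\Pi_\pi\mathbf{H}(\mathcal{G})$. In particular, the label slot $\mathbf{H}_{k+1:n,d+1}$ of the query nodes is identical for $\mathcal{G}$ and $\mathcal{G}_\pi$, so the MLP classifier gives the same $f_\theta(\mathbf{y}_{\mathrm{test}}\mid\cdot)$. The invariance is exact because attention is deterministic and the proof uses no approximation.

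The main obstacle I expect is the cross-reduction step. There one must check that the concatenation places $\mathbf{H}^{\mathrm{sa}}$ and $\mathbf{H}^{\mathrm{mp}}$ tokens as an unordered set carrying no slot-dependent or type-dependent positional signal that interacts with $j$. A type embedding shared by all message-passing tokens would be harmless. I would also make explicit the standing assumptions that no component uses feature-index embeddings, and that any random feature-identifier embeddings are either absent or drawn exchangeably; in the latter case the statement holds in distribution.
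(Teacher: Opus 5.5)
Your proposal is correct and follows essentially the same route as the paper. It shows equivariance of the encoder, the feature attention, the slot-wise maps (sample attention, message passing, LayerNorm/MLP/residuals) and the feature cross-reduction with block-permuted keys, then observes that the classifier reads only the query nodes' label slot $d+1$, which stays fixed. The only difference is cosmetic: you check the attention steps with permutation matrices ($\mathrm{softmax}(PMP^\top)=P\,\mathrm{softmax}(M)P^\top$ and its cross-attention analogue), whereas the paper reindexes the sum of pairwise contributions $a_{j,t}$ via $t'=\pi^{-1}(t)$.
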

The proof is provided in Appendix~\ref{subsec:feature-permutation}. 

Lastly, we introduce the overall training process. During pre-training, we sample a fully labeled graph from the prior $\Pi$ and partition its nodes into context nodes with observed labels $\mathbf{y}_\mathrm{train}$ and query nodes with held-out labels $\mathbf{y}_\mathrm{test}$. We withhold the query labels from the input and use them as prediction targets. We train $f_\theta$ to approximate the PPD by minimizing the expected negative log-likelihood in Eq.~\eqref{eq:loss}. At inference, we use the in-context learning setup, where the model predicts test node labels from the observed labels of the same graph in one forward pass, with its parameters held fixed. Its observed labels form the context, and the test nodes are the queries. We predict the label of each query node $i$ as the most probable class:
\begin{equation}
    \widehat{y}_i
    =
    \arg\max_{c}
    f_{\theta}
    \!\left(
        y_i=c
        \mid
        \mathcal{G}
        \right),
\end{equation}
where $c$ denotes a candidate class label.

\section{Experiments}\label{sec:exp}

\begin{table*}
\centering
\caption{Performance comparison of different methods on 23 real-world benchmark datasets, measured by accuracy. Numbers after ± indicate standard deviations. The best results are in \best{bold} and the second-best results are \secbest{underlined}.}
\label{tab:performance-comparison}
\begin{tabular}{l c c c c c c c c c}
\toprule
Dataset & GCN & GAT & GraphAny & TS-Mean & TS-GAT & RGVT & GraphPFN & NodePFN & \name\\
\midrule
Actor              & 28.55\scalebox{0.75}{±0.68} & 27.30\scalebox{0.75}{±0.22} & 28.60\scalebox{0.75}{±0.21} & 28.09\scalebox{0.75}{±0.93} & 28.80\scalebox{0.75}{±2.12} & 32.28\scalebox{0.75}{±0.68} & \secbest{34.46\scalebox{0.75}{±1.18}} & 32.82\scalebox{0.75}{±1.03} & \best{35.72\scalebox{0.75}{±1.18}}\\
AirBrazil          & 42.31\scalebox{0.75}{±7.98} & 57.69\scalebox{0.75}{±14.75} & 34.61\scalebox{0.75}{±16.09} & 39.23\scalebox{0.75}{±5.70} & 36.92\scalebox{0.75}{±10.03} & \best{86.16\scalebox{0.75}{±5.76}} & 67.69\scalebox{0.75}{±5.76} & \secbest{72.31\scalebox{0.75}{±6.62}} & \secbest{72.31\scalebox{0.75}{±2.88}}\\
AirEU              & 41.88\scalebox{0.75}{±3.60} & 32.50\scalebox{0.75}{±8.45} & 41.50\scalebox{0.75}{±6.50} & 35.88\scalebox{0.75}{±6.91} & 38.38\scalebox{0.75}{±6.32} & 54.75\scalebox{0.75}{±1.75} & \best{57.75\scalebox{0.75}{±4.87}} & 53.50\scalebox{0.75}{±4.53} & \secbest{56.50\scalebox{0.75}{±1.29}}\\
AirUS              & 46.49\scalebox{0.75}{±1.81} & 48.47\scalebox{0.75}{±4.17} & 43.64\scalebox{0.75}{±1.83} & 42.34\scalebox{0.75}{±2.12} & 41.69\scalebox{0.75}{±2.59} & 54.20\scalebox{0.75}{±1.68} & \secbest{61.87\scalebox{0.75}{±1.56}} & 59.14\scalebox{0.75}{±2.30} & \best{62.77\scalebox{0.75}{±1.63}}\\
Amazon-Comp        & \secbest{85.83\scalebox{0.75}{±0.86}} & \best{87.01\scalebox{0.75}{±0.50}} & 83.04\scalebox{0.75}{±1.24} & 81.37\scalebox{0.75}{±1.25} & 80.23\scalebox{0.75}{±2.44} & 83.85\scalebox{0.75}{±0.54} & 77.62\scalebox{0.75}{±1.40} & 78.44\scalebox{0.75}{±1.64} & 83.11\scalebox{0.75}{±1.57}\\
Amazon-Photo       & \secbest{91.88\scalebox{0.75}{±0.79}} & 91.86\scalebox{0.75}{±1.07} & 90.60\scalebox{0.75}{±0.82} & 90.18\scalebox{0.75}{±1.30} & 89.66\scalebox{0.75}{±1.23} & \best{91.98\scalebox{0.75}{±0.44}} & 87.89\scalebox{0.75}{±2.17} & 87.88\scalebox{0.75}{±1.32} & 91.63\scalebox{0.75}{±1.21}\\
Amazon-Ratings     & 47.35\scalebox{0.75}{±0.26} & 47.18\scalebox{0.75}{±0.42} & 42.74\scalebox{0.75}{±0.12} & 42.27\scalebox{0.75}{±1.40} & 41.92\scalebox{0.75}{±0.47} & 43.12\scalebox{0.75}{±0.97} & \secbest{48.52\scalebox{0.75}{±0.50}} & 44.47\scalebox{0.75}{±0.66} & \best{50.01\scalebox{0.75}{±0.22}}\\
Chameleon          & 41.31\scalebox{0.75}{±3.05} & 39.83\scalebox{0.75}{±2.10} & 32.72\scalebox{0.75}{±4.78} & 43.83\scalebox{0.75}{±3.99} & 36.40\scalebox{0.75}{±8.26} & 45.16\scalebox{0.75}{±2.18} & \secbest{46.81\scalebox{0.75}{±3.02}} & 46.68\scalebox{0.75}{±1.51} & \best{47.41\scalebox{0.75}{±3.07}}\\
Citeseer           & 63.40\scalebox{0.75}{±0.63} & \secbest{69.10\scalebox{0.75}{±1.59}} & 68.34\scalebox{0.75}{±0.23} & 68.66\scalebox{0.75}{±0.19} & 68.70\scalebox{0.75}{±0.48} & \best{70.08\scalebox{0.75}{±0.97}} & 60.78\scalebox{0.75}{±1.75} & 67.90\scalebox{0.75}{±0.25} & 68.86\scalebox{0.75}{±0.34}\\
Coauthor CS        & \secbest{91.83\scalebox{0.75}{±0.71}} & 88.47\scalebox{0.75}{±0.79} & 90.45\scalebox{0.75}{±0.59} & 90.92\scalebox{0.75}{±0.41} & 91.13\scalebox{0.75}{±0.43} & 90.12\scalebox{0.75}{±0.22} & 91.25\scalebox{0.75}{±0.42} & 90.92\scalebox{0.75}{±0.48} & \best{92.56\scalebox{0.75}{±0.36}}\\
Coauthor Physics   & \best{93.93\scalebox{0.75}{±0.37}} & 93.01\scalebox{0.75}{±0.89} & 92.69\scalebox{0.75}{±0.52} & 92.61\scalebox{0.75}{±0.61} & 92.23\scalebox{0.75}{±0.61} & 93.54\scalebox{0.75}{±0.06} & 91.12\scalebox{0.75}{±1.07} & 93.37\scalebox{0.75}{±0.45} & \secbest{93.77\scalebox{0.75}{±0.36}}\\
Cora               & 81.40\scalebox{0.75}{±0.70} & \secbest{81.70\scalebox{0.75}{±1.43}} & 79.38\scalebox{0.75}{±0.16} & 58.08\scalebox{0.75}{±0.48} & 22.42\scalebox{0.75}{±12.90} & 81.40\scalebox{0.75}{±0.96} & 75.98\scalebox{0.75}{±1.07} & \best{82.84\scalebox{0.75}{±0.28}} & 81.22\scalebox{0.75}{±0.07}\\
Cornell            & 35.14\scalebox{0.75}{±6.51} & 35.14\scalebox{0.75}{±3.52} & 65.94\scalebox{0.75}{±1.48} & 68.65\scalebox{0.75}{±2.42} & 71.35\scalebox{0.75}{±4.52} & 43.79\scalebox{0.75}{±2.65} & \secbest{71.89\scalebox{0.75}{±5.57}} & 69.73\scalebox{0.75}{±3.15} & \best{77.84\scalebox{0.75}{±2.02}}\\
DBLP               & 73.02\scalebox{0.75}{±2.22} & 73.87\scalebox{0.75}{±1.35} & 70.90\scalebox{0.75}{±0.88} & 66.42\scalebox{0.75}{±3.65} & 64.30\scalebox{0.75}{±3.75} & 75.19\scalebox{0.75}{±2.97} & 70.18\scalebox{0.75}{±1.63} & \best{78.08\scalebox{0.75}{±0.92}} & \secbest{77.88\scalebox{0.75}{±1.29}}\\
Deezer             & 53.69\scalebox{0.75}{±2.29} & \best{55.99\scalebox{0.75}{±3.78}} & 52.11\scalebox{0.75}{±2.79} & 52.31\scalebox{0.75}{±2.52} & 52.88\scalebox{0.75}{±2.86} & \secbest{54.56\scalebox{0.75}{±0.98}} & 50.79\scalebox{0.75}{±1.18} & 50.32\scalebox{0.75}{±0.97} & 50.02\scalebox{0.75}{±0.91}\\
Minesweeper        & \secbest{81.12\scalebox{0.75}{±0.37}} & 80.08\scalebox{0.75}{±0.04} & 80.30\scalebox{0.75}{±0.13} & 80.68\scalebox{0.75}{±0.38} & 80.72\scalebox{0.75}{±0.78} & 80.00\scalebox{0.75}{±0.00} & \best{83.62\scalebox{0.75}{±0.43}} & 79.85\scalebox{0.75}{±0.12} & 80.00\scalebox{0.75}{±0.00}\\
Pubmed             & 76.60\scalebox{0.75}{±0.32} & 77.30\scalebox{0.75}{±0.60} & 76.36\scalebox{0.75}{±0.17} & 74.98\scalebox{0.75}{±0.56} & 75.46\scalebox{0.75}{±0.86} & 78.26\scalebox{0.75}{±0.56} & 78.52\scalebox{0.75}{±0.75} & \secbest{78.64\scalebox{0.75}{±0.29}} & \best{80.10\scalebox{0.75}{±0.14}}\\
Questions          & 97.15\scalebox{0.75}{±0.04} & 97.11\scalebox{0.75}{±0.02} & 97.09\scalebox{0.75}{±0.02} & 97.02\scalebox{0.75}{±0.01} & 97.02\scalebox{0.75}{±0.01} & 97.02\scalebox{0.75}{±0.00} & \best{97.27\scalebox{0.75}{±0.03}} & 97.02\scalebox{0.75}{±0.01} & \secbest{97.17\scalebox{0.75}{±0.02}}\\
Squirrel           & 38.67\scalebox{0.75}{±1.84} & 38.78\scalebox{0.75}{±2.39} & 25.85\scalebox{0.75}{±1.11} & 39.80\scalebox{0.75}{±5.01} & 35.84\scalebox{0.75}{±3.33} & 41.99\scalebox{0.75}{±0.70} & \best{49.06\scalebox{0.75}{±1.20}} & 42.33\scalebox{0.75}{±1.73} & \secbest{44.87\scalebox{0.75}{±1.80}}\\
Texas              & 51.35\scalebox{0.75}{±2.71} & 54.05\scalebox{0.75}{±2.41} & 72.97\scalebox{0.75}{±2.71} & 73.51\scalebox{0.75}{±4.53} & 74.05\scalebox{0.75}{±4.10} & 69.73\scalebox{0.75}{±1.08} & \secbest{78.38\scalebox{0.75}{±10.95}} & 76.76\scalebox{0.75}{±3.67} & \best{83.24\scalebox{0.75}{±3.97}}\\
Tolokers           & \secbest{79.93\scalebox{0.75}{±0.10}} & 78.50\scalebox{0.75}{±0.55} & 78.18\scalebox{0.75}{±0.04} & 78.12\scalebox{0.75}{±0.09} & 78.01\scalebox{0.75}{±0.23} & 78.26\scalebox{0.75}{±0.20} & \best{83.75\scalebox{0.75}{±0.54}} & 78.18\scalebox{0.75}{±0.03} & 78.20\scalebox{0.75}{±0.05}\\
WikiCS             & 79.12\scalebox{0.75}{±0.45} & \secbest{79.27\scalebox{0.75}{±0.20}} & 74.95\scalebox{0.75}{±0.61} & 74.16\scalebox{0.75}{±2.07} & 73.24\scalebox{0.75}{±2.49} & 78.34\scalebox{0.75}{±0.57} & 78.75\scalebox{0.75}{±0.92} & 75.98\scalebox{0.75}{±0.80} & \best{81.07\scalebox{0.75}{±0.70}}\\
Wisconsin          & 37.25\scalebox{0.75}{±1.64} & 52.94\scalebox{0.75}{±3.10} & 65.10\scalebox{0.75}{±3.22} & 61.18\scalebox{0.75}{±11.38} & 65.88\scalebox{0.75}{±9.86} & 61.18\scalebox{0.75}{±5.74} & \secbest{82.35\scalebox{0.75}{±6.44}} & 75.69\scalebox{0.75}{±3.42} & \best{83.14\scalebox{0.75}{±3.64}}\\
\midrule
\multicolumn{1}{c}{Average Accuracy} & 63.44 & 64.66 & 64.70 & 64.36 & 62.49 & 68.91 & \secbest{70.71} & 70.12 & \best{72.58}\\
\multicolumn{1}{c}{Average Ranking} & 4.74 & 5.07 & 6.41 & 6.63 & 6.67 & 4.30 & \secbest{3.87} & 4.70 & \best{2.61}\\
\bottomrule
\end{tabular}
\end{table*}

In this section, we conduct extensive experiments to evaluate \name, focusing on four research questions: 
\begin{itemize}[leftmargin=1.5em]
    \item \textbf{RQ1}: Can \name~effectively handle multi-domain graph tasks without utilizing real data during pre-training? %
    \item \textbf{RQ2}: How do the number of observed labels and label noise affect the performance of \name?
    \item \textbf{RQ3}: How do the key components affect the results?
    \item \textbf{RQ4}: Can \name~generate diverse synthetic graphs with topological patterns? %
\end{itemize}

\subsection{Experimental Setup}

\paragraph{Datasets}

Following~\cite{choi2026nodepfn}, we evaluate our method on 23 benchmark datasets spanning a wide range of domains, including academic collaboration, social, e-commerce, and airline route networks. The task is node classification and the datasets contain between 2 and 15 classes. Further details are provided in Appendix~\ref{subsec:dataset}.

\paragraph{Baselines}

We compare \name~against baselines from three groups. The first comprises standard GNNs, including GCN~\cite{kipf2017semi} and GAT~\cite{velickovic2018graph}, trained from scratch on each dataset. The second comprises conventional GFMs, including GraphAny~\cite{zhao2025graphany}, TS-GNN~\cite{finkelshtein2026equivariance}, and RGVT~\cite{lee2026view}. The third comprises PFN-based GFMs, including GraphPFN~\cite{eremeev2026graphpfn} and NodePFN~\cite{choi2026nodepfn}. To ensure a fair comparison, we adapt both GFM groups using only the training split of each target dataset, with hyperparameters fixed across datasets.

\paragraph{Implementation Details}

We realize \name~as a 9-layer transformer with embedding dimension 96, three attention heads of dimension 32, and four position-wise feed forward networks of hidden size 384 with GELU activations in each layer. Every sublayer uses pre-normalization with a LayerNorm that has no affine parameters. A linear head maps the test-node label representations to 20 class logits. Optimization uses AdamW with learning rate $3\times10^{-4}$, no weight decay, gradient clipping at 1, and a cosine schedule that warms up for 5 epochs and decays to $10^{-5}$. Mixed precision is enabled. We train for 50 epochs on 128 GPUs, with one graph per GPU and 1000 steps per epoch, and use the epoch-50 checkpoint. Only the training split is provided as labeled context; validation and test nodes are queries, and their labels are masked. Edges are symmetrized and self-loops are removed. We follow the fixed split of each dataset, repeat the evaluation for five runs, and report mean accuracy and standard deviation. Additional implementation details are provided in Appendix~\ref{subsec:imple-detail}.

\subsection{Performance Comparisons}

To answer \textbf{RQ1}, we report the results of \name~compared to baseline methods in Table~\ref{tab:performance-comparison}. The results show that \name~outperforms standard GNNs, conventional GFMs, and PFN-based GFMs, achieving the best average accuracy and average ranking. 
For example, on Cornell, \name~achieves an 8.28\% relative improvement in accuracy over the second-best method. Similarly, on Texas, \name~achieves a 6.20\% relative improvement in accuracy over the second-best method. Compared with GraphPFN, the strongest baseline by average accuracy, \name~improves average accuracy by 2.64\% in relative terms and reduces the average rank by 1.26. Beyond Cornell and Texas, it also leads on homophilic graphs such as Pubmed and Coauthor CS.

Furthermore, the PFN-based GFMs achieve the top two results in both average accuracy and average rank, highlighting the promise of PFN-based methods for multi-domain node classification. RGVT is the strongest baseline outside this group, yet its average accuracy of 68.91 still trails \name~by 3.67 points. This performance may stem from their synthetic-prior-based in-context learning mechanism, which enables adaptation to individual graph datasets at inference time without additional training or dataset-specific hyperparameter tuning. By leveraging labeled context nodes, \name~adapts its predictions to downstream node-classification tasks, yielding generalizable and flexible performance.

\subsection{In-Depth Analyses}

\subsubsection{Effectiveness under Varying Numbers of Observed Labels and Label Noise}

\begin{figure}
    \centering
    \begin{subfigure}{\linewidth}
        \centering
        \includegraphics[width=\linewidth]{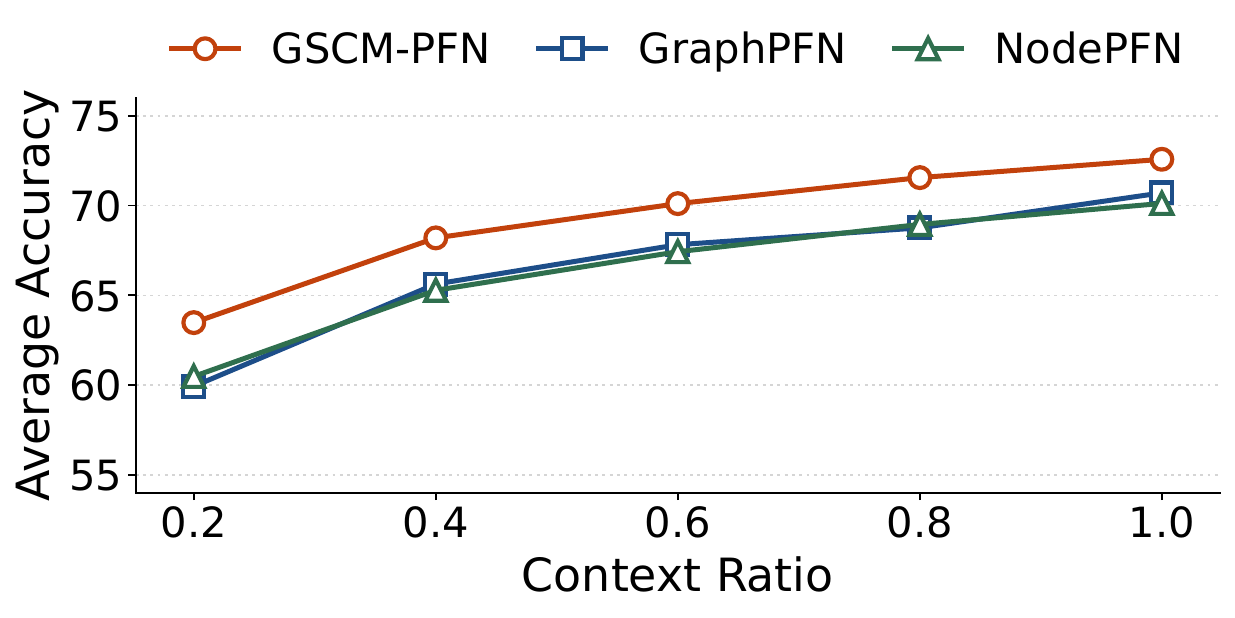}
    \end{subfigure}
    \begin{subfigure}{\linewidth}
        \centering
        \includegraphics[width=\linewidth]{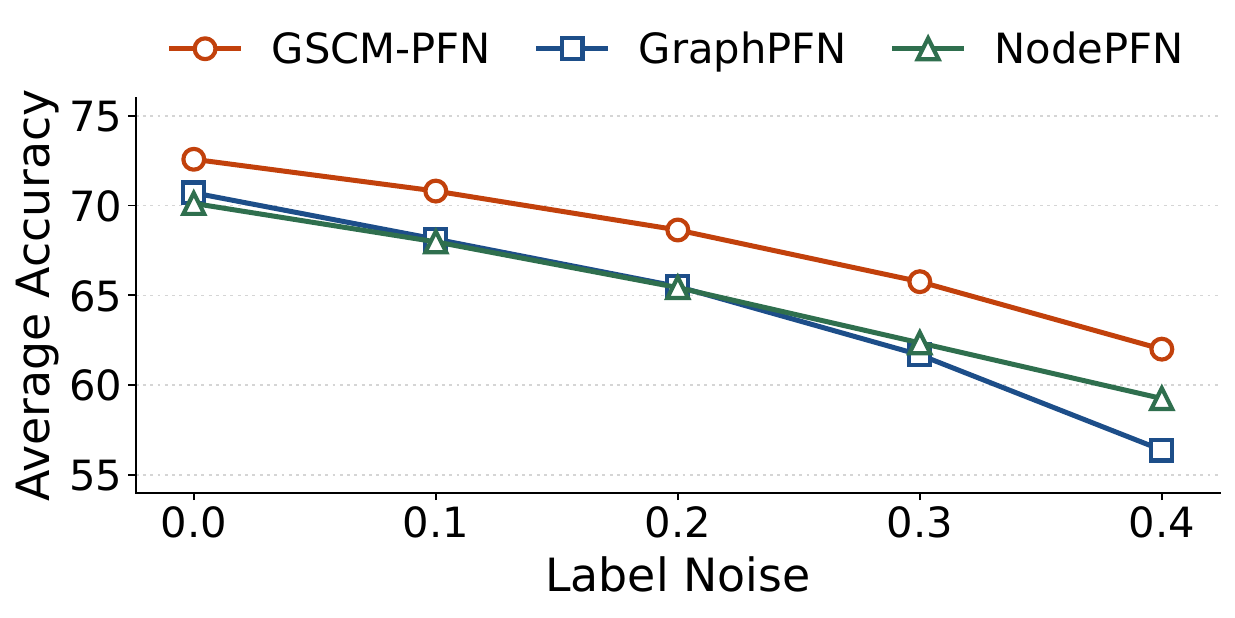}
    \end{subfigure}
    \caption{The impact of context ratio and label noise.}
    \label{fig:context}
\end{figure}

To answer \textbf{RQ2}, we evaluate \name, GraphPFN, and NodePFN while varying the number of observed labels and the level of label noise, and report average accuracy in Figure~\ref{fig:context}. We keep 20\% to 100\% of the original training split as observed labels and randomly corrupt 0\% to 40\% of them. The results show that \name's performance improves as more labels are observed and fewer of them are corrupted, suggesting that a larger and more reliable set of observed labels helps the model capture the properties of the target graph. Moreover, \name~consistently achieves higher average accuracy than GraphPFN and NodePFN across these settings, demonstrating that the improvement remains stable as the number and noise of observed labels vary.

\subsubsection{Ablation Studies}

\begin{table}
\centering
\caption{Results of ablation for the synthetic prior and the model architecture. We report the average test accuracy. Numbers after $-$ indicate the change relative to the full model.}
\label{tab:ablation}
\begin{tabular}{l c}
\toprule
Method & Accuracy \\ 
\midrule
Full \name &\best{72.58} \\
\midrule
\rowcolor{gray!15}\multicolumn{2}{c}{\textit{Prior}} \\
\text{w/o prior-guided message passing} & 68.00\scalebox{0.75}{($-$4.58)} \\
\text{w/o $Y$ Decouple} & 71.62\scalebox{0.75}{($-$0.96)} \\
\text{w/o $A$ Decouple} & 71.25\scalebox{0.75}{($-$1.33)} \\
\text{w/o multilayer structure} & 71.11\scalebox{0.75}{($-$1.47)} \\
\text{w/o feature relationships} & 71.06\scalebox{0.75}{($-$1.52)} \\
\text{w/o sample relationships} & 69.47\scalebox{0.75}{($-$3.11)} \\
\midrule
\rowcolor{gray!15}\multicolumn{2}{c}{\textit{Architecture}} \\
\text{w/o feature cross-reduction} & 72.09\scalebox{0.75}{($-$0.49)} \\
\text{w/o message passing} & 68.67\scalebox{0.75}{($-$3.92)} \\
\bottomrule
\end{tabular}
\end{table}

To answer \textbf{RQ3}, we ablate six components of the synthetic prior and two components of the model architecture. For the synthetic prior, \textit{w/o prior-guided message passing} removes the \scmdag~module. \textit{w/o $Y$ Decouple} removes the separate latent group for node labels, and \textit{w/o $A$ Decouple} removes the separate latent group for edges. \textit{w/o multilayer structure} replaces the multiple \scmdag~layers with a single layer. \textit{w/o feature relationships} removes the linear transformations that model causal dependencies among variables within each group. \textit{w/o sample relationships} removes the joint layerwise updates of the feature-, label-, and edge-related variables, which capture sample-level dependencies. For the architecture, \textit{w/o feature cross-reduction} replaces feature cross-reduction with the mean of the attention and message-passing pathways. \textit{w/o message passing} removes feature-wise message passing along graph edges.

Table~\ref{tab:ablation} reports the average test accuracy, with drops measured relative to the full model. The \scmdag~module is the most important prior component for generating diverse synthetic graphs with structured dependencies, and removing it causes the largest drop. Explicitly modeling label- and edge-related latent variables benefits the prior, as shown by \textit{w/o $Y$ Decouple} and \textit{w/o $A$ Decouple}. Multilayer causal updates help capture dependencies across hops, and \textit{w/o multilayer structure} lowers accuracy. Feature-level relationships improve the prior, while sample-level dependencies contribute more, with \textit{w/o sample relationships} dropping further than \textit{w/o feature relationships}. On the architecture, message passing makes a substantial contribution to capturing graph topology, and feature cross-reduction adds a smaller but consistent benefit by combining the two pathways. 

Overall, each component of the synthetic prior and the model architecture contributes to \name's performance, with the \scmdag~module and the message-passing pathway showing the largest effects.

\subsubsection{Characteristics of Synthetic Prior Datasets}

\begin{figure}
    \centering
        \begin{subfigure}{0.49\linewidth}
        \centering
        \includegraphics[width=\linewidth]{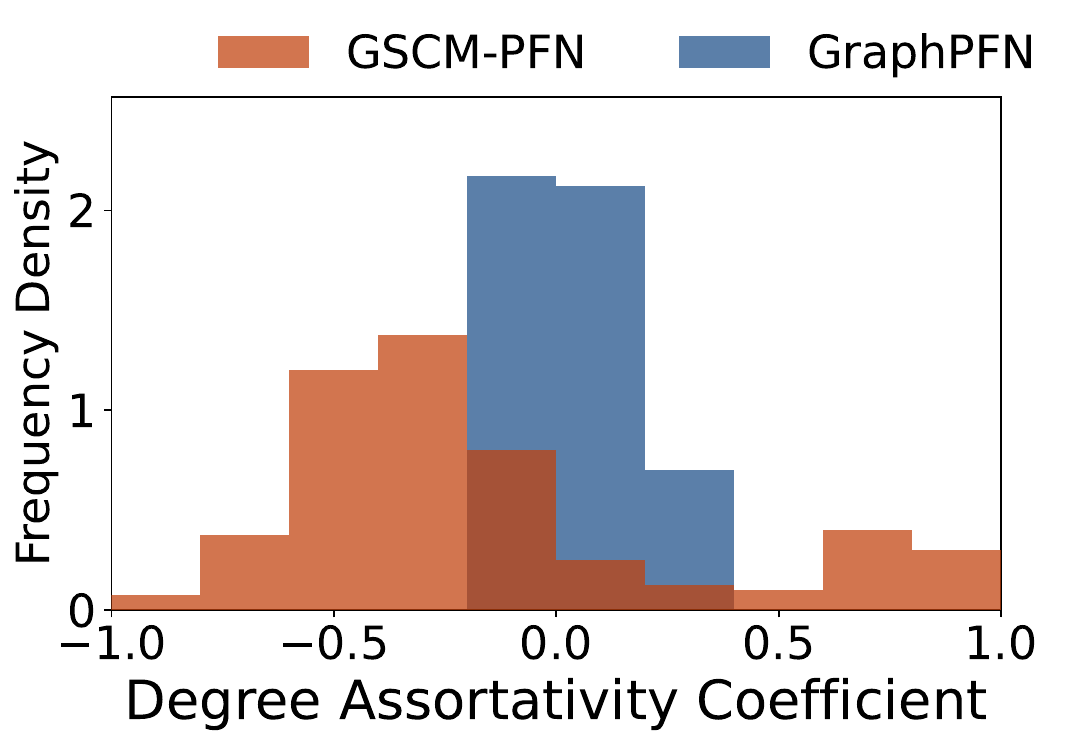}
        \end{subfigure}
        \begin{subfigure}{0.49\linewidth}
        \centering
        \includegraphics[width=\linewidth]{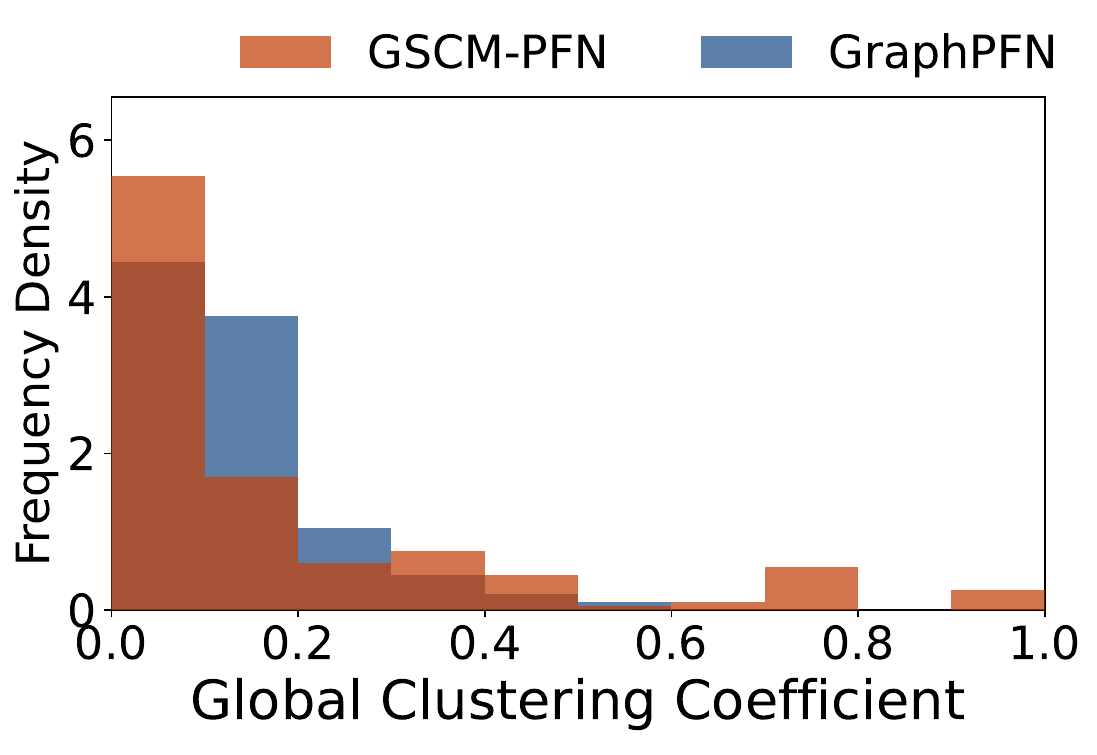}
        \end{subfigure}
    \centering
        \begin{subfigure}{0.49\linewidth}
        \centering
        \includegraphics[width=\linewidth]{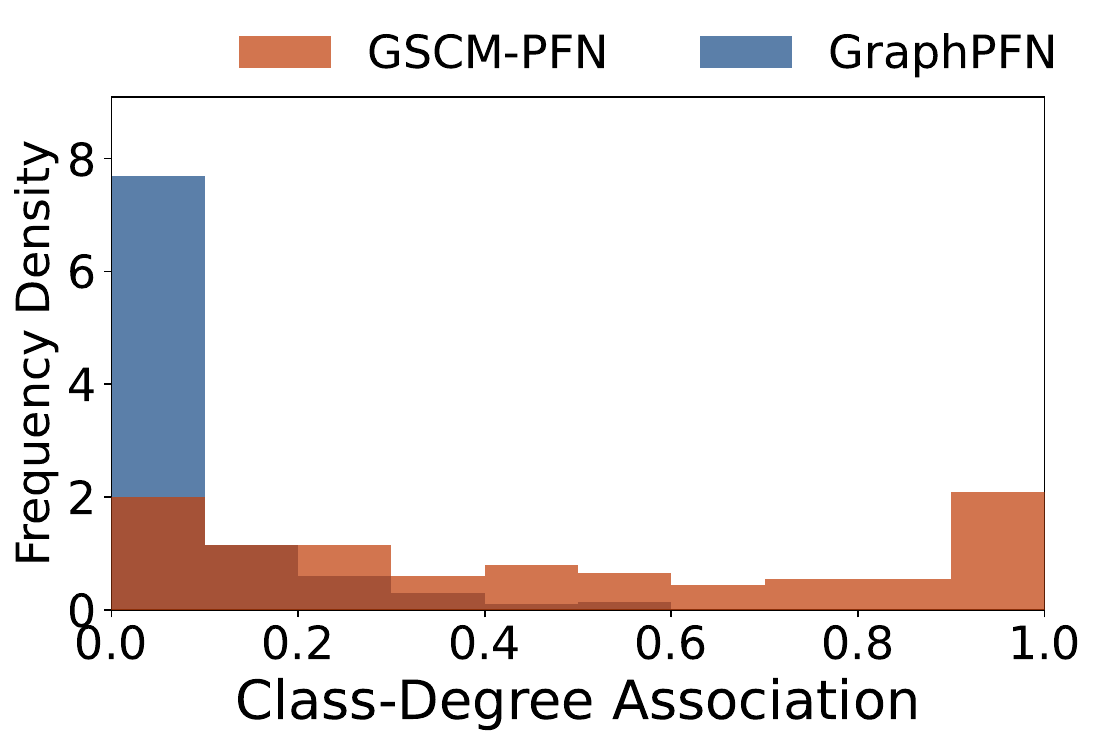}
        \end{subfigure}
        \begin{subfigure}{0.49\linewidth}
        \centering
        \includegraphics[width=\linewidth]{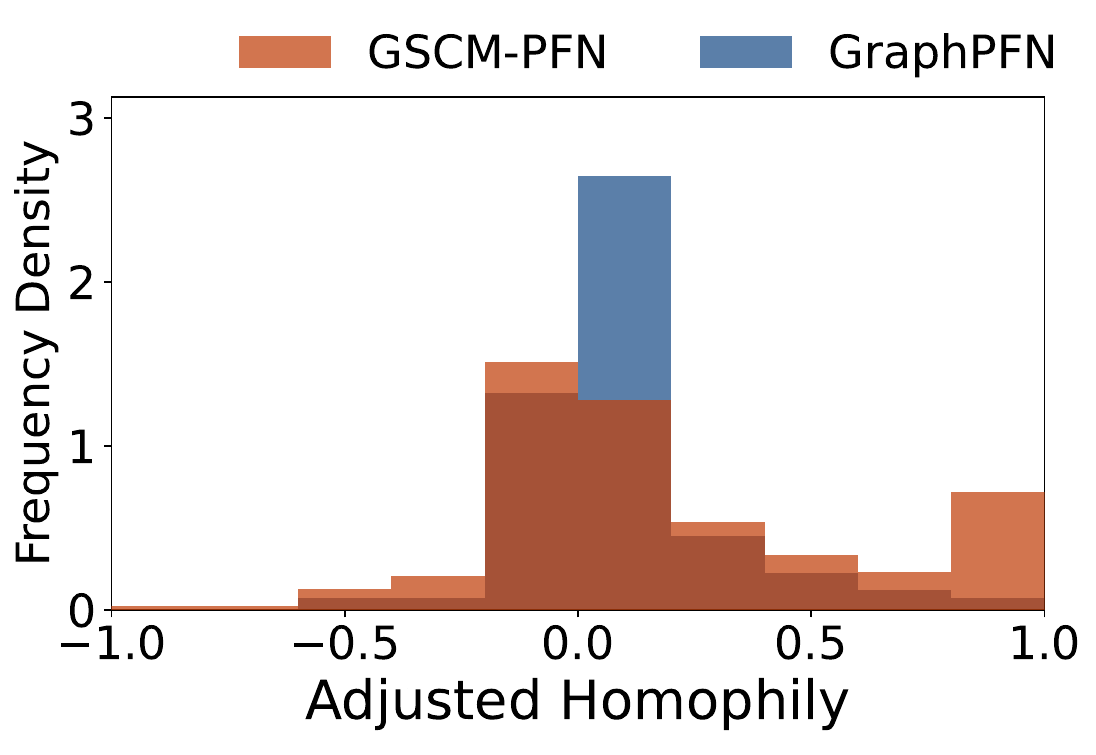}
        \end{subfigure}
    \caption{Histograms comparing the degree assortativity, global clustering, class-degree association, and adjusted homophily of synthetic graphs.}
    \label{fig:statistics}
\end{figure}

\begin{figure*}
    \centering
    \begin{subfigure}[t]{0.32\textwidth}
        \centering
        \includegraphics[width=0.43\linewidth,height=0.12\textheight,keepaspectratio]{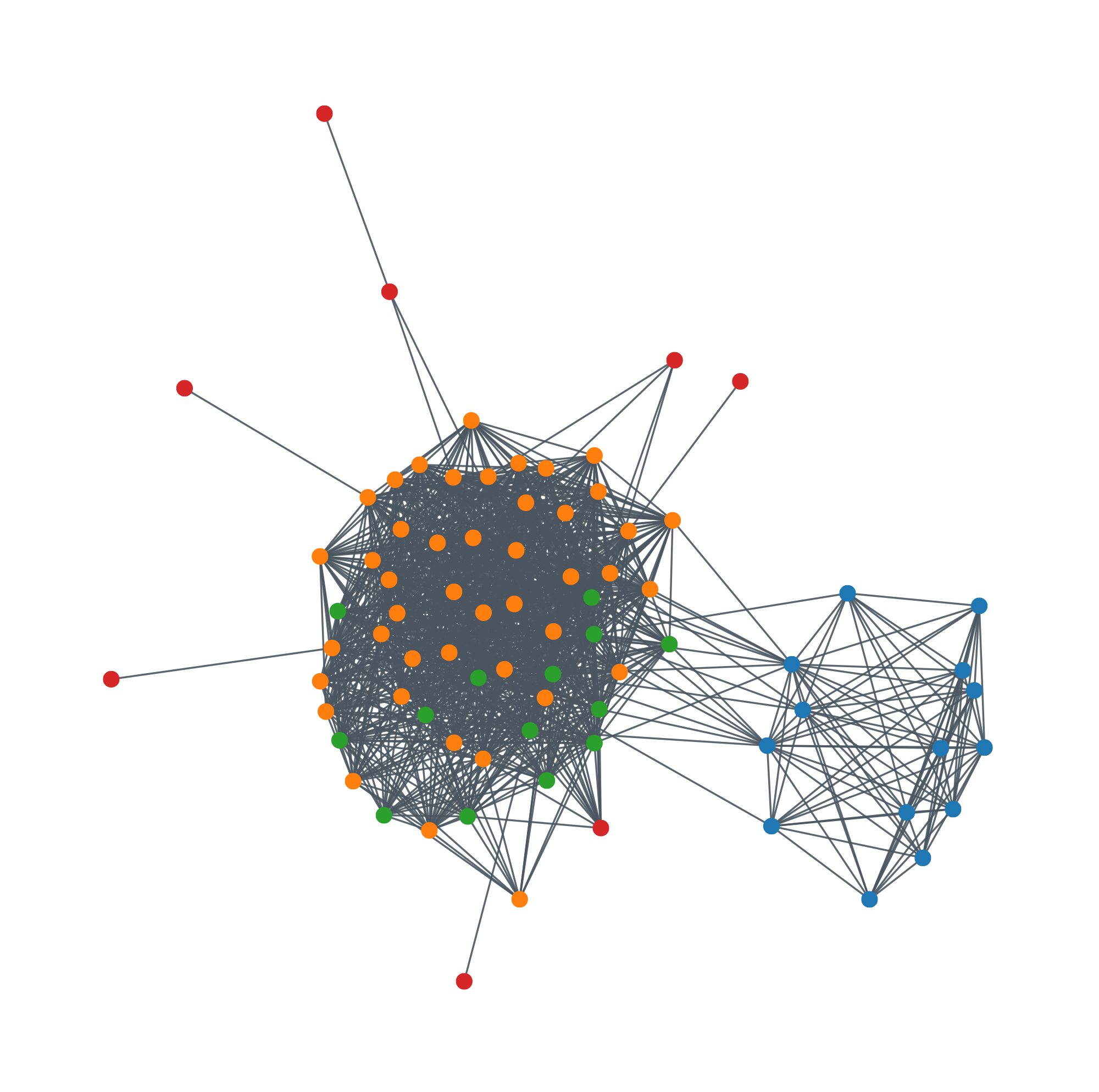}\hfill
        \includegraphics[width=0.43\linewidth,height=0.12\textheight,keepaspectratio]{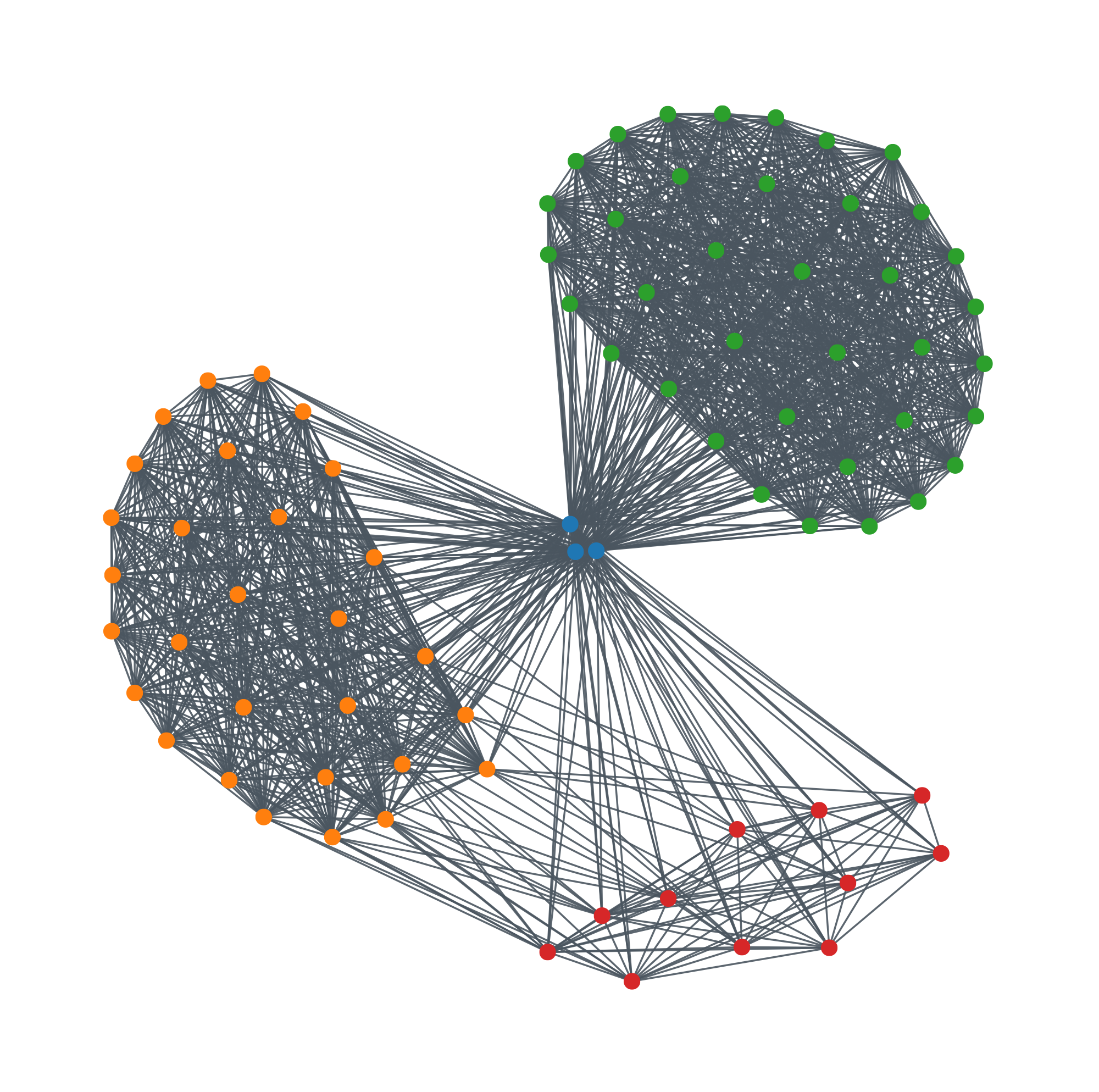}\\[0.5em]
        \includegraphics[width=0.43\linewidth,height=0.12\textheight,keepaspectratio]{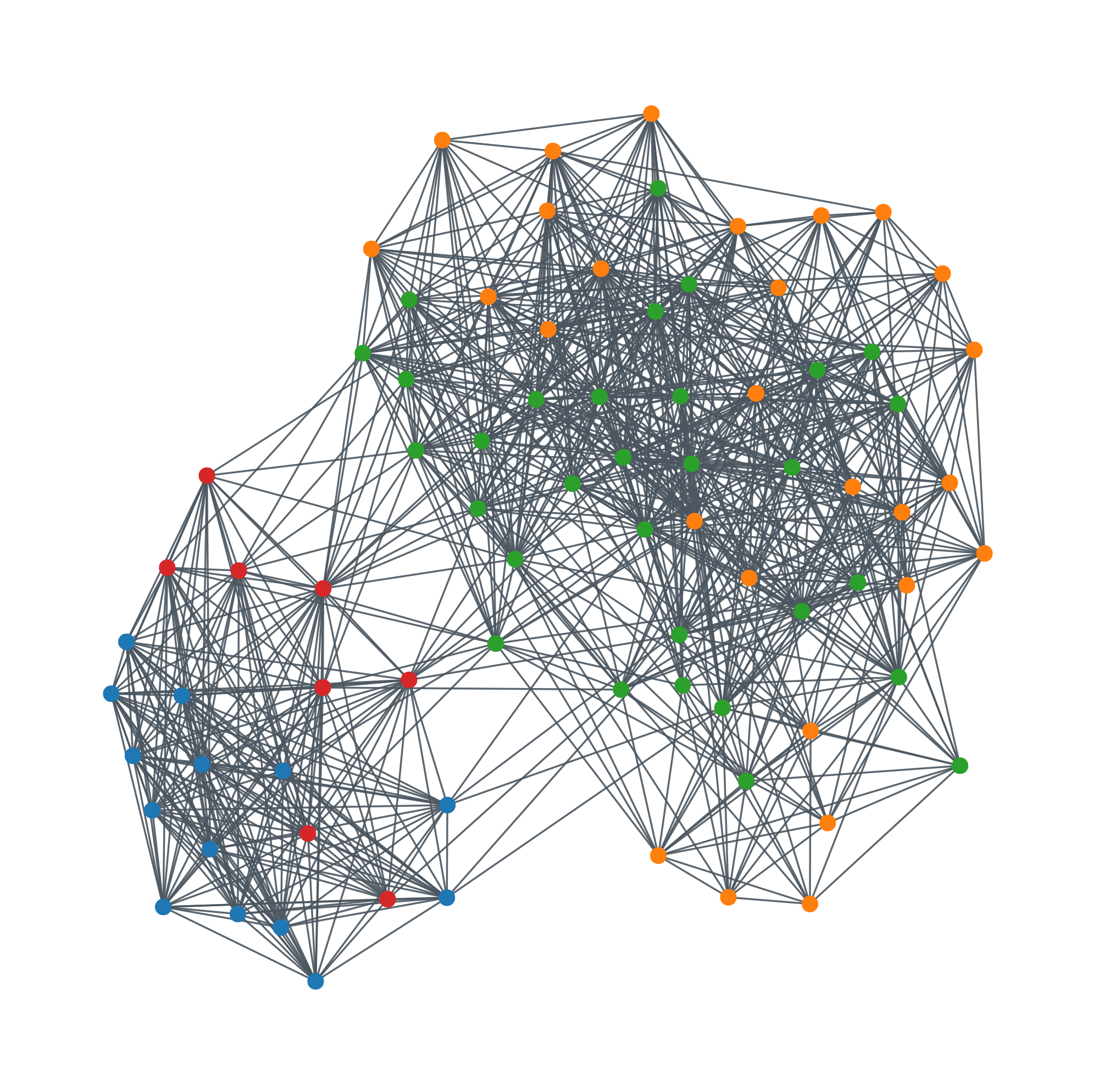}\hfill
        \includegraphics[width=0.43\linewidth,height=0.12\textheight,keepaspectratio]{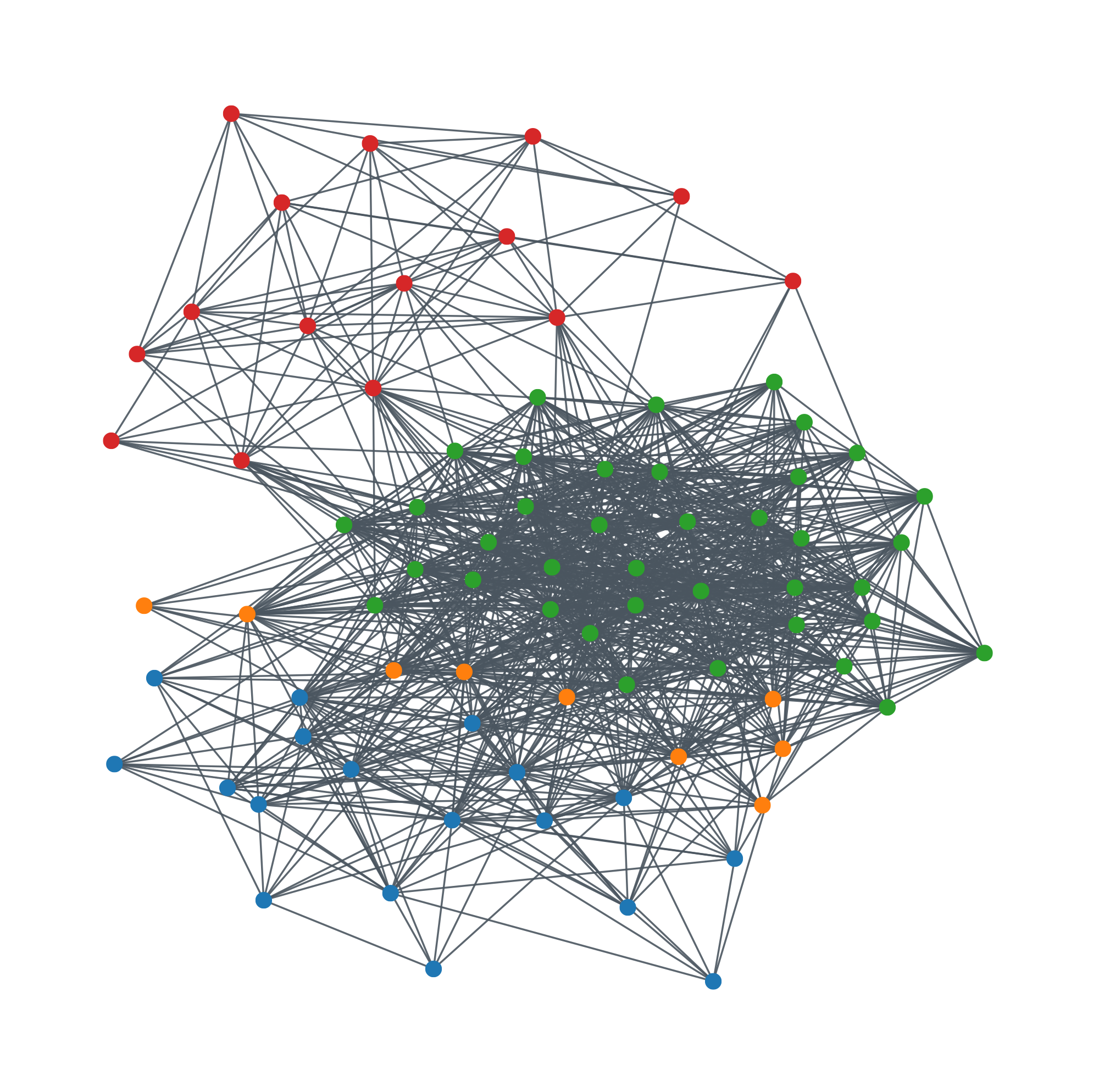}
    \end{subfigure}\hfill
    \begin{subfigure}[t]{0.32\textwidth}
        \centering
        \includegraphics[width=0.43\linewidth,height=0.12\textheight,keepaspectratio]{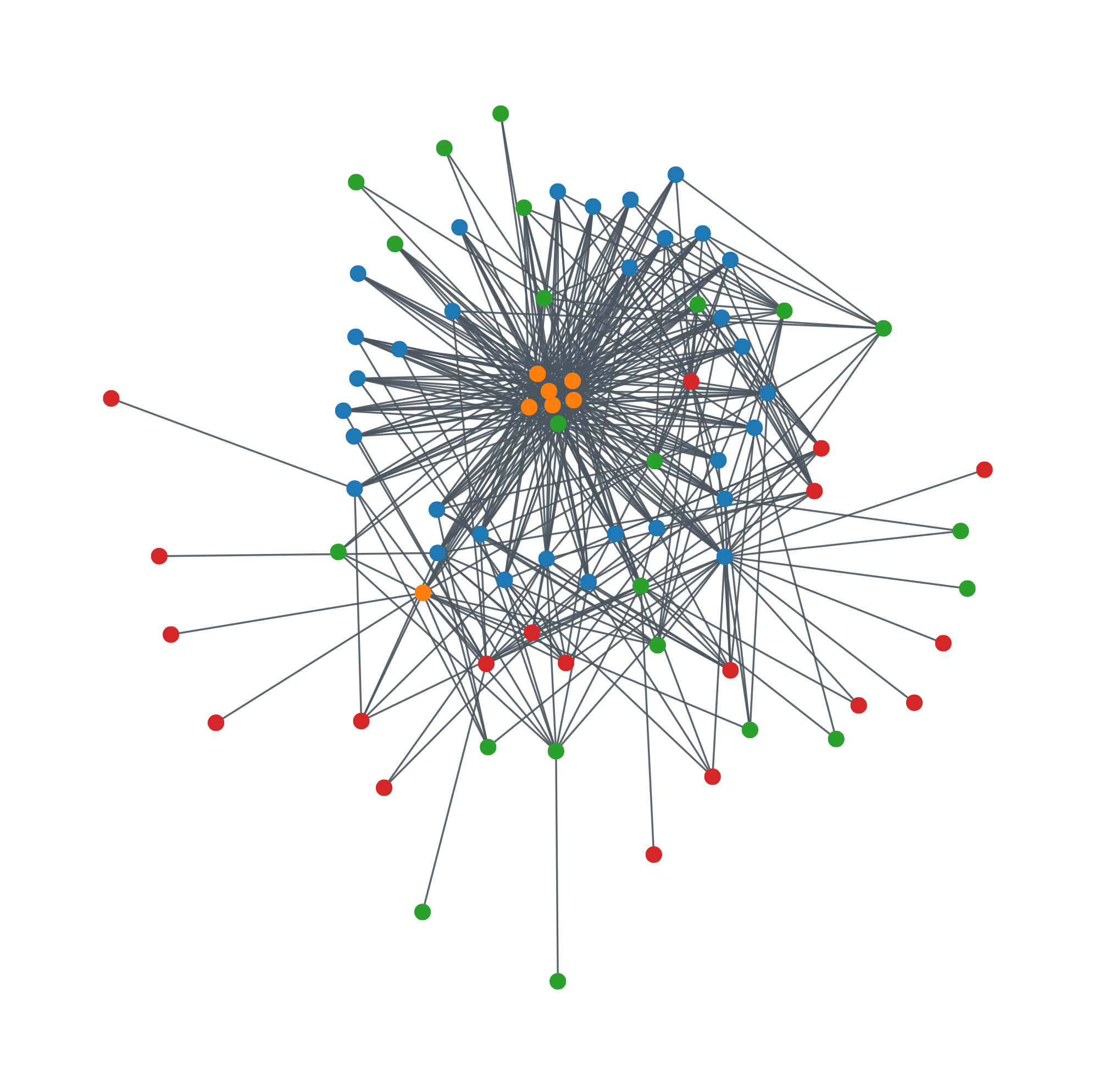}\hfill
        \includegraphics[width=0.43\linewidth,height=0.12\textheight,keepaspectratio]{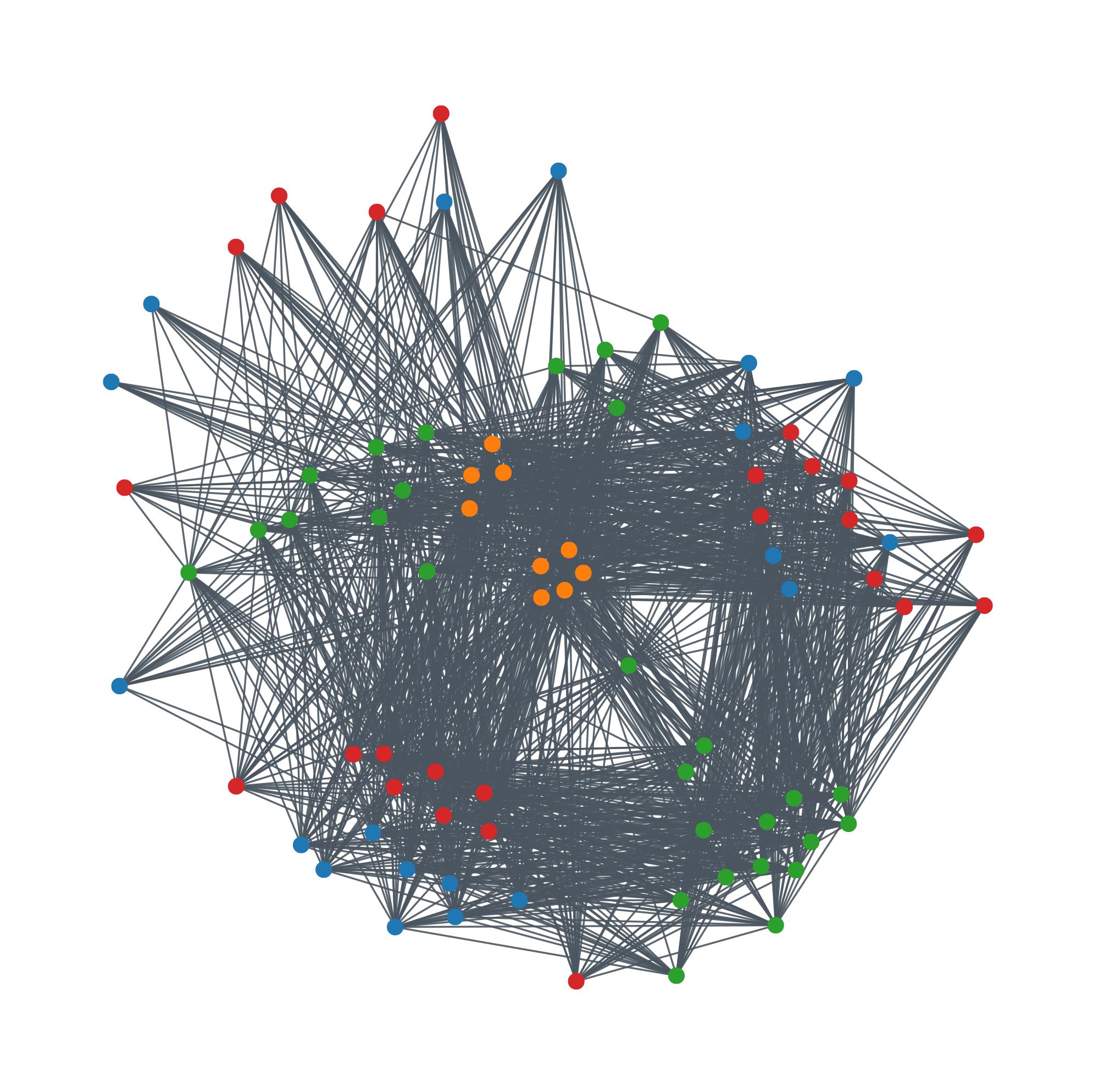}\\[0.5em]
        \includegraphics[width=0.43\linewidth,height=0.12\textheight,keepaspectratio]{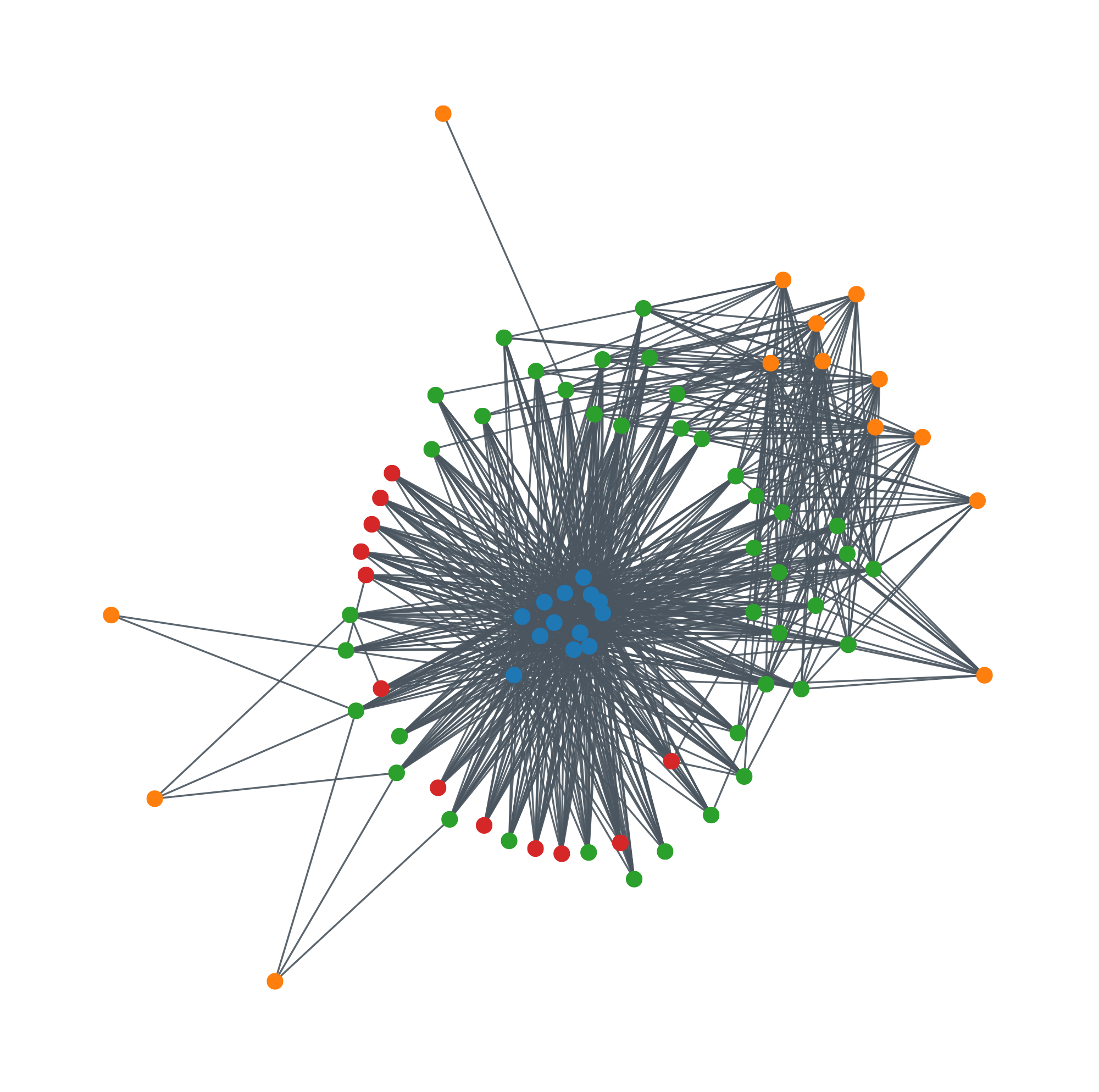}\hfill
        \includegraphics[width=0.43\linewidth,height=0.12\textheight,keepaspectratio]{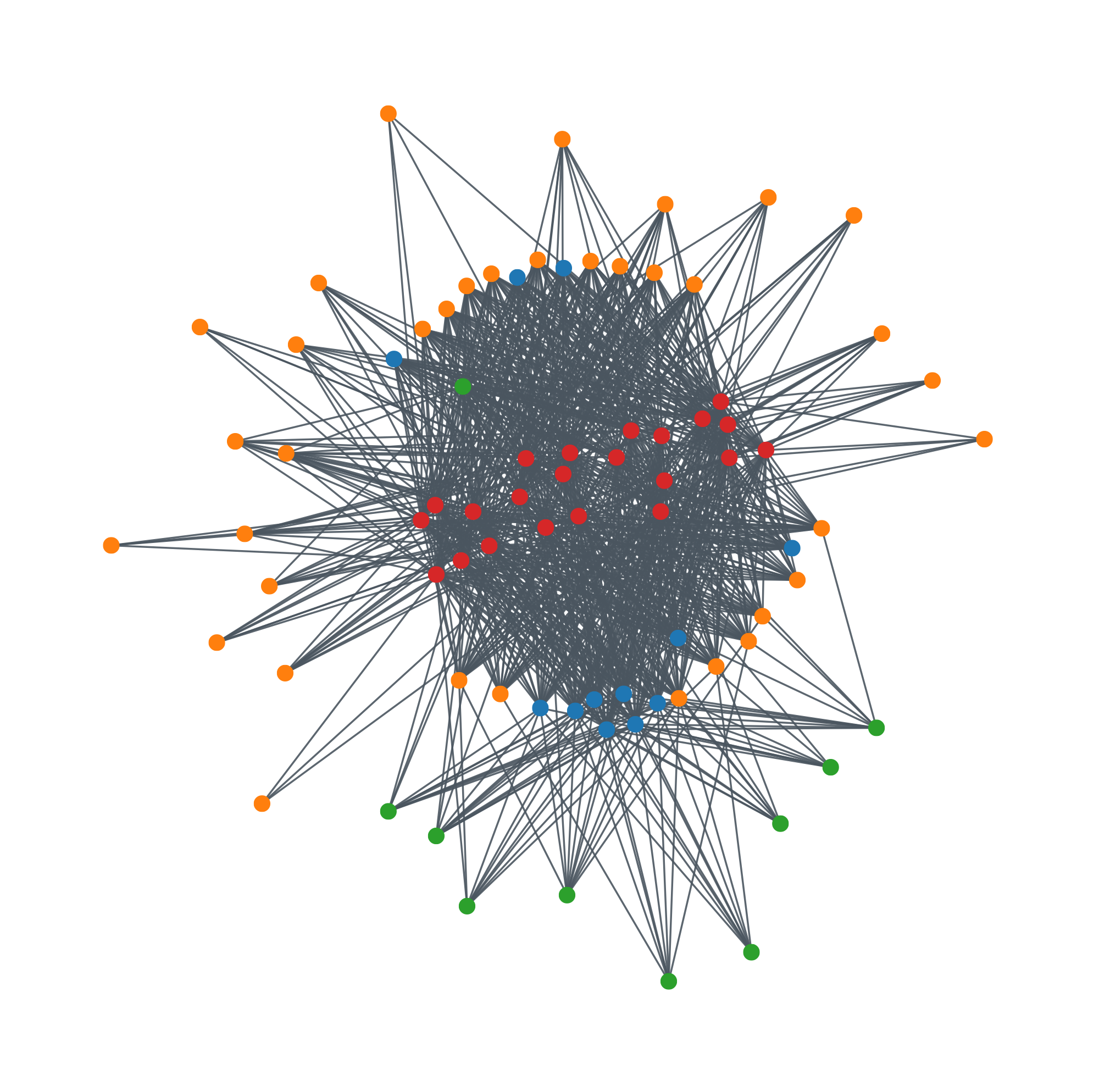}
    \end{subfigure}\hfill
    \begin{subfigure}[t]{0.32\textwidth}
        \centering
        \includegraphics[width=0.43\linewidth,height=0.12\textheight,keepaspectratio]{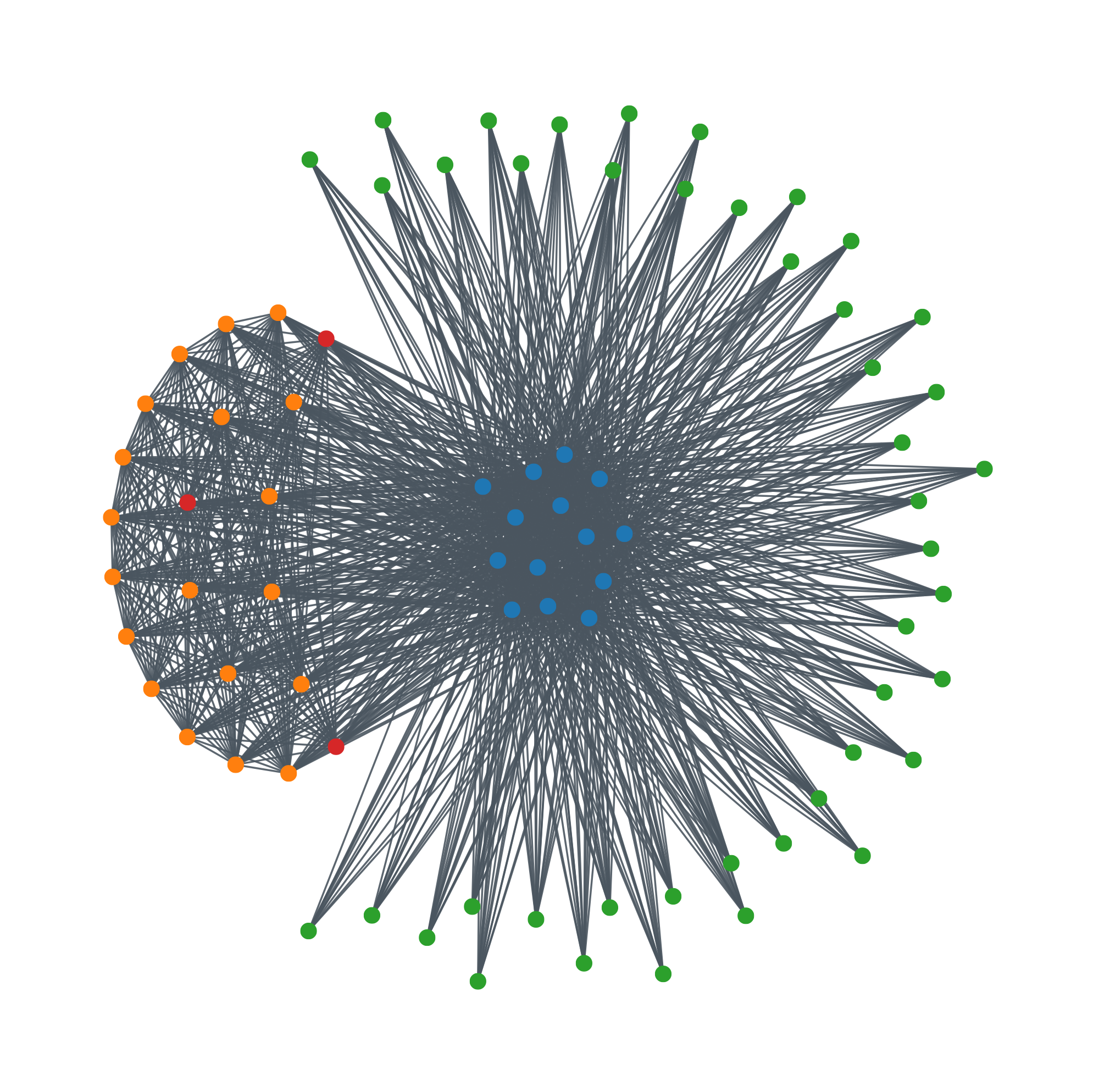}\hfill
        \includegraphics[width=0.43\linewidth,height=0.12\textheight,keepaspectratio]{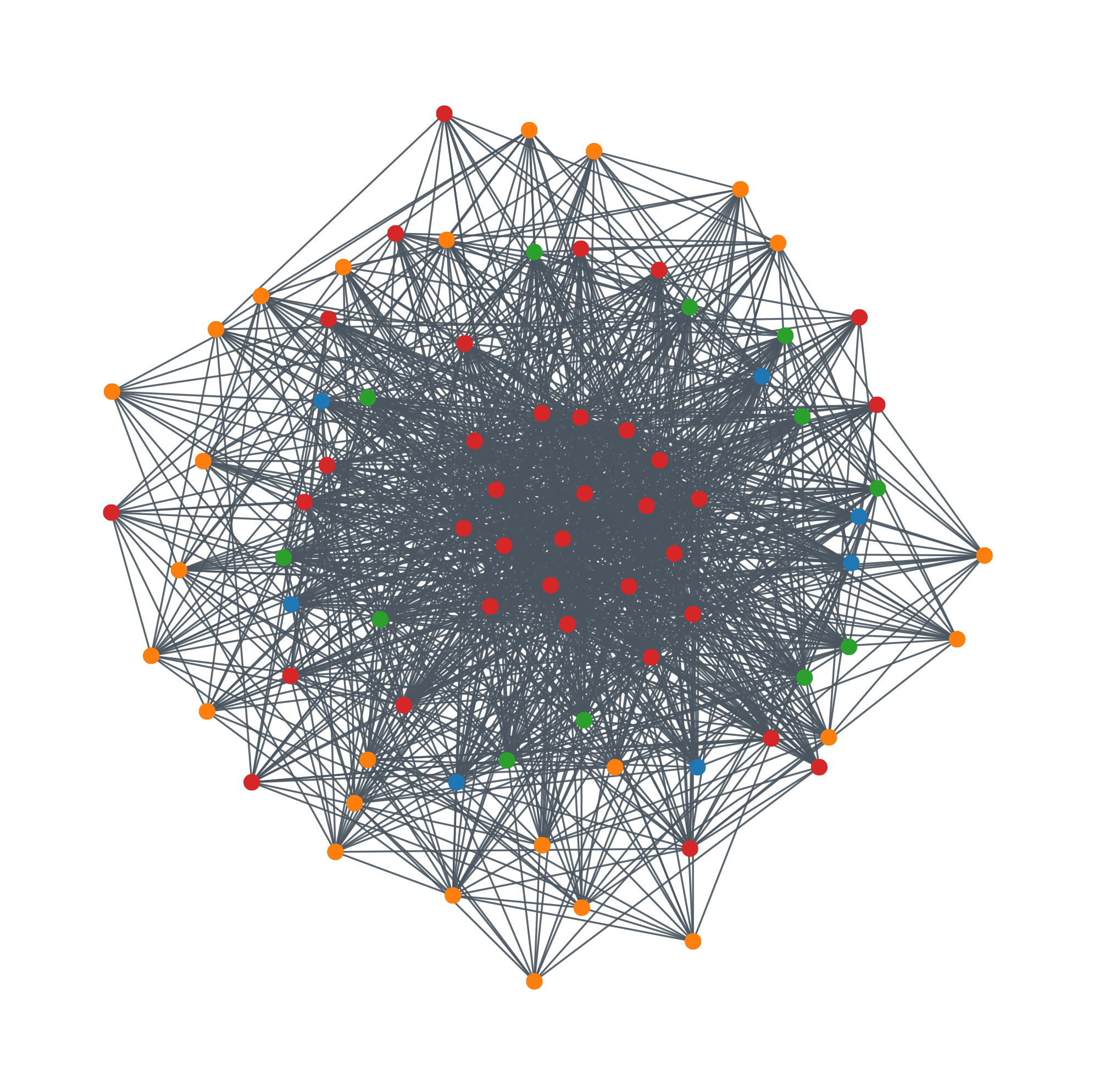}\\[0.5em]
        \includegraphics[width=0.43\linewidth,height=0.12\textheight,keepaspectratio]{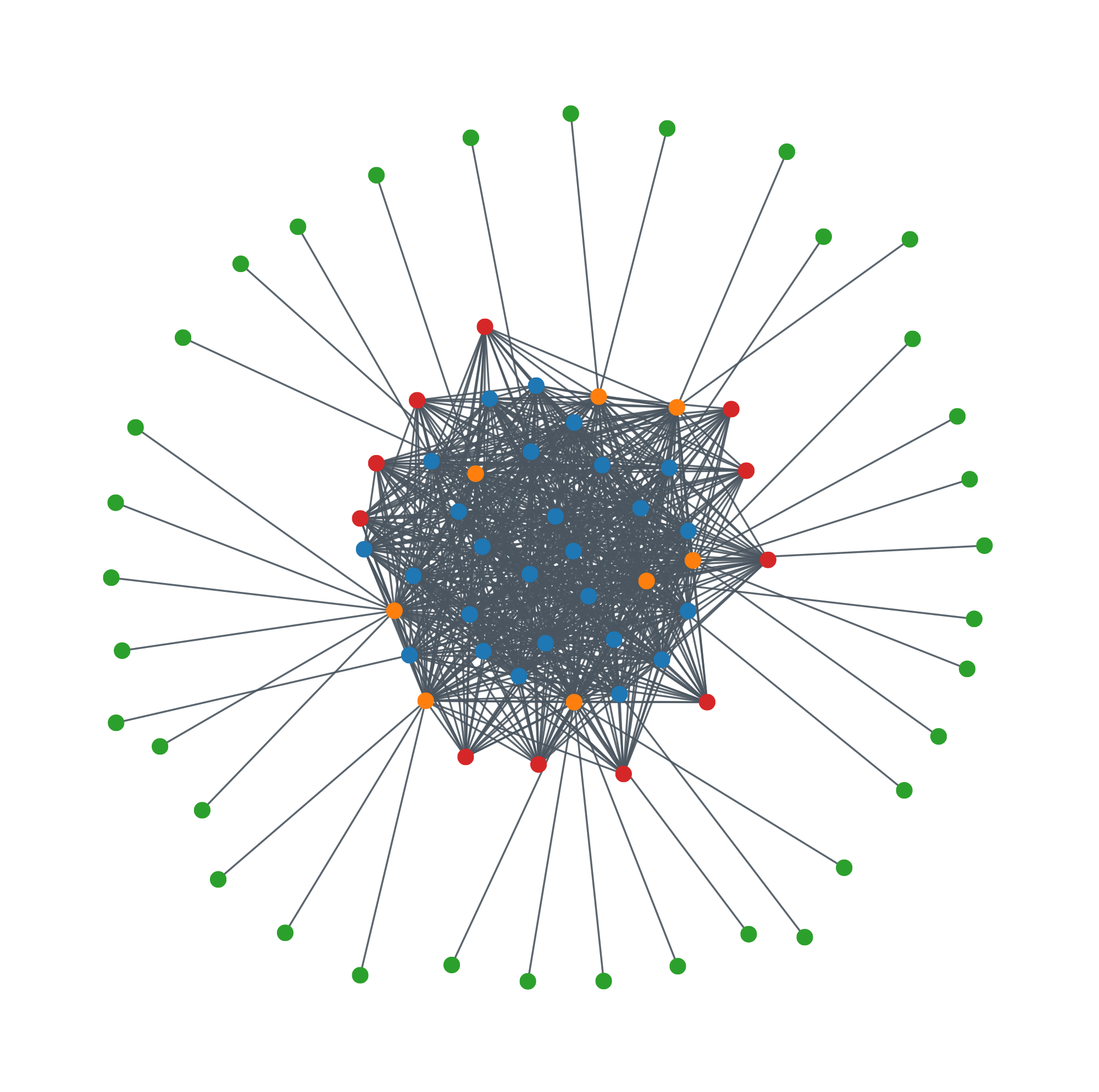}\hfill
        \includegraphics[width=0.43\linewidth,height=0.12\textheight,keepaspectratio]{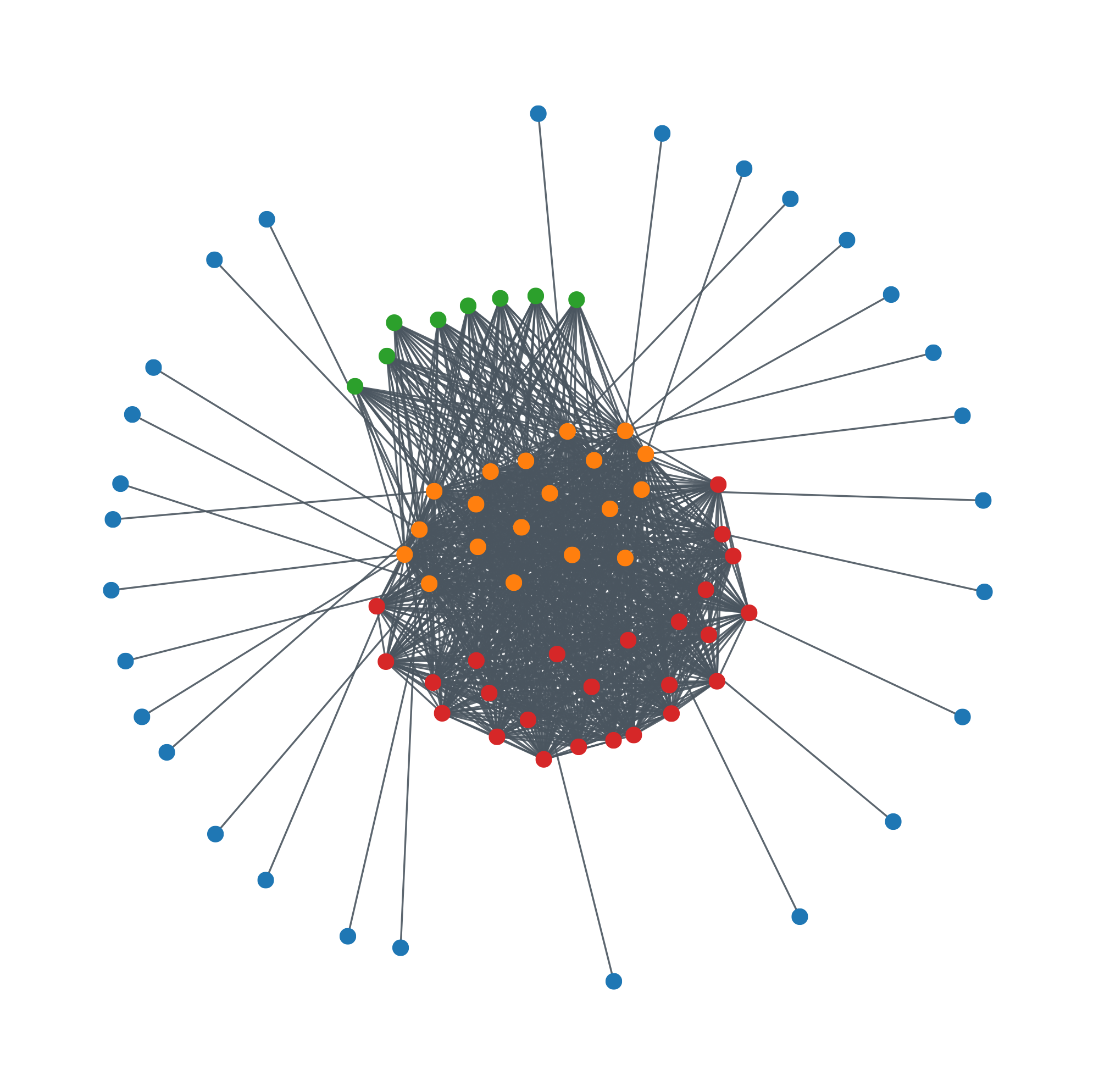}
    \end{subfigure}
    \caption{Visualizations of synthetic graphs generated by our method.}
    \label{fig:visualization}
\end{figure*}

To answer \textbf{RQ4}, we compare graphs generated by our prior with those generated by GraphPFN, the second-best method, using four statistics: (1) the degree assortativity coefficient~\cite{newman2002assortative} is the correlation between the degrees at the two ends of an edge. Positive values indicate that high-degree nodes tend to connect to each other, whereas negative values indicate that high-degree nodes tend to connect to low-degree nodes; (2) the global clustering coefficient~\cite{newman2001random} is the fraction of connected triples that close into triangles, so it measures triadic closure through how strongly nodes form local clusters; (3) the class-degree association is the fraction of degree variance explained by the node label;
(4) adjusted homophily compares the fraction of edges that connect nodes of the same class with the fraction expected from the class-size distribution.
For each method, we generate 200 graphs with the same number of nodes, sampled from $[100,2000]$, and keep the average degree close to 40, so that synthetic graphs from different methods are compared under matched size and density. 

Figure~\ref{fig:statistics} shows the distributions of the statistics. Across all four measures, our prior covers a broader range than GraphPFN, suggesting that it captures a wider variety of graph structures. Moreover, the degree assortativity coefficient and the global clustering coefficient of our graphs span a wide range, demonstrating that the prior can generate graphs with degree assortativity and triadic closure.

We further visualize graphs sampled from our synthetic prior in Figure~\ref{fig:visualization}, with node colors indicating their labels. The prior can generate several graph structures commonly observed in real-world data, including homophilic, heterophilic, and core-periphery graphs~\cite{borgatti2000models}. Our synthetic data generation algorithm can further capture a continuous core-periphery structure. For example, in the rightmost graph of the first row and the second graph from the right in the second row, connectivity gradually becomes sparser from the center to the periphery. Together, these results show that our prior can generate complex graphs exhibiting triadic closure, degree assortativity, and continuous core-periphery structure.

\section{Related Work}\label{sec:related}

\subsection{Graph Foundation Models}

GFMs aim to transfer knowledge across graph datasets and generalize to previously unseen graphs and tasks. From an architectural perspective, they can be broadly grouped into GNN-based~\cite{zhao2025graphany,finkelshtein2026equivariance}, LLM-based~\cite{chen2024exploring}, and hybrid GNN+LLM models~\cite{liuone}. 

\textbf{GNN-based models}. They exploit graph structure through topology-aware inductive biases. For example, GraphAny~\cite{zhao2025graphany} formulates inference on a new graph as an analytical LinearGNN solution and combines predictions from multiple such solutions using inductive attention. SAMGPT~\cite{yu2025samgpt} pre-trains a message-passing encoder on text-free graphs from multiple domains, inserting learnable structure tokens into each layer to align topology-dependent aggregation, and adapts the frozen encoder to node classification on an unseen graph with dual structural prompts.

\textbf{LLM-based models}. They incorporate semantic information through natural language processing. InstructGLM~\cite{ye2024language}, for instance, extends the vocabulary of a pre-trained language model by encoding node features as distinct tokens, facilitating cross-modal graph understanding. LLaGA~\cite{chen2024llaga} reorganizes the neighborhood of a target node into a structure-aware sequence and maps it into the token embedding space of a frozen language model, which then performs node classification on citation and product graphs.

\textbf{Hybrid GNN+LLM models}. They integrate structural and textual information. For example, OFA~\cite{liuone} aligns textual descriptions of nodes and edges into a shared embedding space through language modeling. GraphGPT~\cite{tang2024graphgpt} pairs this with the opposite interface: a graph encoder produces structural embeddings, a lightweight projector turns them into graph tokens, and two-stage instruction tuning teaches the language model to classify nodes in both supervised and zero-shot settings.

Despite these advances, existing approaches remain constrained by their underlying architectures, including limited pre-training scale~\cite{liu2025graph} and reliance on textual alignment~\cite{eremeev2026turning,eremeev2026graphpfn}.

\subsection{Prior-data Fitted Networks}

PFNs offer an alternative to conventional pre-training by learning in-context prediction from synthetic tasks sampled from prior distributions over data-generating processes. TabPFN~\cite{hollmanntabpfn,hollmann2025accurate} and LimiX~\cite{zhang2025limix,zhang2026limix} have demonstrated strong performance on tabular tasks, while other works, such as TuneTables~\cite{feuer2024tunetables} and BETA~\cite{liutabpfn}, have extended PFN approaches to larger and real-world datasets. Recent work has brought this paradigm to graph tasks~\cite{eremeev2026turning,liu2026tabularfoundationmodelsstrong,eremeev2026graphpfn} along two broad directions. The first direction adapts tabular PFNs to graph prediction. For example, G2T-FM~\cite{eremeev2026turning} and TAG~\cite{hayler2025bringing} apply tabular PFN approaches to graph tasks. However, these methods rely heavily on hand-crafted features and lack large-scale pre-training on diverse graph data, which limits their ability to capture complex graph patterns~\cite{eremeev2026graphpfn}. The second direction develops graph-specific priors for multi-domain node classification, as in GraphPFN~\cite{eremeev2026graphpfn} and NodePFN~\cite{choi2026nodepfn}. Despite these advances, existing methods still encounter severe difficulties in jointly modeling causal relationships within and across node and edge sets, generating diverse graph structures, and designing suitable architectures that can learn from complex synthetic datasets. 

\section{Conclusion}\label{sec:con}
In this work, we propose \name, a graph PFN pre-trained on synthetic data generated from an SCM-based prior that jointly models nodes and graph topology. Our method effectively achieves this goal by integrating three key components, including the \scm, the \scmdag~module, and the \architecture. Extensive experiments on 23 real-world benchmarks demonstrate that \name~outperforms both conventional GFMs and PFN-based GFMs, without utilizing real-world graph data for pre-training.

\section*{Ethics and Privacy Statement}

This study uses benchmark graph datasets and synthetically generated data and does not involve human participants or the collection of new personal information. We have not identified any specific ethical or privacy concerns arising from this work.

\bibliographystyle{ACM-Reference-Format}
\bibliography{base}

\appendix

\section{Theory}

\subsection{Graph Generation Strategy Expressiveness}
\label{subsec:edge-independence}

Let $n\ge 3$, and index the nodes by $u$ and $v$.
At each layer $\ell$, $V^{\ell}_{A}$ is the variable representing the edges, and $(V^{\ell}_{A})_{uv}$ is its coordinate on nodes $u$ and $v$.
Write $E^{(\ell)}_{uv}=(V^{\ell}_{A})_{uv}$ and $E^{(\ell)}=(E^{(\ell)}_{uv})_{1\le u<v\le n}$.
Let $E$ collect these coordinates over all layers.
The layered assignment updates the variables by Eq.~\eqref{eq:dag-layer}.
Let $b_{u}$ be the discrete block assignment of node $u$, and set $b=(b_{u})_{u=1}^{n}$.
For a pair $e=\{u,v\}$, write $b_{e}=(b_{u},b_{v})$.
As in the problem formulation, draw $p\sim\Pi$ and then draw a fully labeled graph from $p$.
The initialization of $V^{0}_{G}$ is drawn from an ER or a Bernoulli SBM prior, and the strategy realizes $p$ by applying Eq.~\eqref{eq:layer-maps} and then designating the observables.
Here Eq.~\eqref{eq:layer-maps} is the decomposition of the layer map into $g_{X}$, $g_{Y}$, and $g_{A}$.

\begin{definition}[Independently given discrete block assignments]
\label{def:block-indep}
$E$ is drawn independently given discrete block assignments if
\begin{equation}
\label{eq:block-factor}
P(E\mid b)
=
\prod_{1\le u<v\le n} P(E_{uv}\mid b_{u},b_{v})
\end{equation}
almost surely.
$E$ is drawn independently when Eq.~\eqref{eq:block-factor} holds for an almost surely constant $b$.
A constant assignment is the ER model.
An assignment for which each edge probability depends on the endpoints only through $(b_{u},b_{v})$ is the Bernoulli SBM.
A distribution $p$ lies in $\mathcal{P}_{\mathrm{ER}}$ or $\mathcal{P}_{\mathrm{SBM}}$ when the edges of a graph drawn from $p$ are generated by the corresponding model.
$\mathcal{P}_\mathrm{SCM}$ is the set of distributions $p$ realizable by the SCM-based strategy.
\end{definition}

\begin{proposition}
\label{prop:block-consequences}
Assume Eq.~\eqref{eq:block-factor}.
For distinct pairs $e,e^{\prime}$,
\begin{equation}
\label{eq:zero-cov}
\operatorname{Cov}\bigl(E_{e},E_{e^{\prime}}\mid b\bigr)
= 0
\end{equation}
almost surely.
The same identity holds for bounded measurable functions of these two coordinates.
Thus, for any finite set $\mathcal{T}$ of pairs, and for a measurable set $\mathcal{A}_{e}$ associated with each pair $e\in\mathcal{T}$,
\begin{equation}
\label{eq:motif-factor}
P(E_{e}\in \mathcal{A}_{e}\ \text{for all}\ e\in \mathcal{T}\mid b)
=
\prod_{e\in \mathcal{T}} P(E_{e}\in \mathcal{A}_{e}\mid b).
\end{equation}
Each coordinate is then determined by the block assignments of its own endpoints and by a noise variable private to that coordinate.
Given $b$, these private noises are independent.
\end{proposition}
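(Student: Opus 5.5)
The plan is to derive every claim in Proposition~\ref{prop:block-consequences} directly from the conditional factorization Eq.~\eqref{eq:block-factor}, working throughout under the regular conditional distribution $P(\cdot\mid b)$. The coordinates live in a standard Borel space, so this distribution exists and is a product measure for almost every $b$. The first step is to marginalize Eq.~\eqref{eq:block-factor}. For any subset $\mathcal{T}$ of pairs and measurable sets $\mathcal{A}_e$, integrate the product over the coordinates not in $\mathcal{T}$ (taking $\mathcal{A}_e$ to be the whole space for $e\notin\mathcal{T}$). Each of these factors integrates to one, which yields Eq.~\eqref{eq:motif-factor} almost surely. Since there are countably many rational generating rectangles, the union of exceptional null sets is still null. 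A $\pi$--$\lambda$ argument then extends the identity from rectangles to all measurable sets.

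The second step treats the covariance claims. Given two distinct pairs $e,e'$, apply Eq.~\eqref{eq:motif-factor} with $\mathcal{T}=\{e,e'\}$. This shows that the joint conditional law of $(E_e,E_{e'})$ given $b$ is the product of its marginals. For bounded measurable $\phi,\psi$, Fubini (or a standard simple-function approximation) then gives
\begin{equation*}
\mathbb{E}\bigl[\phi(E_e)\psi(E_{e'})\mid b\bigr]=\mathbb{E}\bigl[\phi(E_e)\mid b\bigr]\,\mathbb{E}\bigl[\psi(E_{e'})\mid b\bigr]
\end{equation*}
almost surely, so $\operatorname{Cov}(\phi(E_e),\psi(E_{e'})\mid b)=0$. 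Eq.~\eqref{eq:zero-cov} is the special case where $\phi$ and $\psi$ are the identity. This requires the coordinates to be bounded or at least square integrable; edge indicators are bounded, and otherwise I would state integrability as part of the mild assumptions.

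The third step is the private-noise representation, which I expect to be the main technical obstacle because it is a construction rather than a calculation. For each pair $e=\{u,v\}$, let $F_e(\cdot\mid b_u,b_v)$ be the conditional law of $E_e$, which depends on $b$ only through $b_e$ by Eq.~\eqref{eq:block-factor}. Draw i.i.d.\ $U_e\sim\mathrm{Unif}[0,1]$ independent of $b$, and set $\tilde E_e=F_e^{-1}(U_e\mid b_u,b_v)$ using the generalized quantile function, or more generally a Borel isomorphism for vector-valued coordinates. Measurability of $(x,b_e,t)\mapsto F_e^{-1}(t\mid b_e)$ holds because $b_e$ is discrete, so the function is a finite or countable patchwork of measurable quantile maps. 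By construction, $\tilde E_e$ given $b$ has law $F_e(\cdot\mid b_e)$, and the $\tilde E_e$ are conditionally independent. The product law of $(\tilde E_e)_e$ therefore coincides with $P(E\mid b)$, so $(b,\tilde E)\overset{d}{=}(b,E)$. If an almost-sure rather than distributional representation is wanted, the transfer theorem supplies it on an enlarged probability space. The final step records that the $U_e$ are independent given $b$, since they are independent of $b$ and of each other.
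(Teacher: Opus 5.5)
Your proposal is correct and follows the same route as the paper. The paper's proof only notes that Eq.~\eqref{eq:block-factor} is a product regular conditional distribution, so Eq.~\eqref{eq:zero-cov} and Eq.~\eqref{eq:motif-factor} hold, and that each factor is drawn on its own, which gives the private-noise form; your marginalization, Fubini, and quantile-plus-transfer construction supply the measure-theoretic details it omits, including deriving Eq.~\eqref{eq:motif-factor} from the full product (pairwise zero covariance alone would not suffice) and the integrability caveat Eq.~\eqref{eq:zero-cov} needs when the $E_e$ are unbounded.
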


\begin{proof}
Eq.~\eqref{eq:block-factor} is a product regular conditional distribution, so Eq.~\eqref{eq:zero-cov} and Eq.~\eqref{eq:motif-factor} hold.
Each factor $P(E_{uv}\mid b_{u},b_{v})$ is drawn on its own, which is the private-noise form.
\end{proof}

\begin{assumption}[Block assignments do not read edges]
\label{ass:A}
Each $b_{u}$ is generated from node-level initial variables.
No coordinate of $b$ is a descendant of any coordinate of $E$.
\end{assumption}

\begin{assumption}[Cross-Edge Path]
\label{ass:B}
There exist a layer $\ell$ and distinct nodes $u,v,w$ such that Eq.~\eqref{eq:dag-layer} contains the path
\begin{equation}
\label{eq:cross-path}
E^{(\ell)}_{uv}
\longrightarrow
(V^{\ell+1}_{X})_{u}
\longrightarrow
E^{(\ell+2)}_{uw}.
\end{equation}
The middle term is the coordinate of $V^{\ell+1}_{X}$ on node $u$.
The first arrow is $g_{X}(\cdot)$, and the second is $g_{A}(\cdot)$ at the next layer.
No coordinate of $b$ lies on this path.
\end{assumption}

\begin{assumption}[Non-Degenerate Transmission]
\label{ass:C}
On a set of mechanism parameters with positive probability, the map in Eq.~\eqref{eq:cross-path} is non-constant:
with $b$ and every exogenous variable that does not feed $E^{(\ell)}_{uv}$ held fixed, $E^{(\ell+2)}_{uw}$ still changes with $E^{(\ell)}_{uv}$.
When the two maps are differentiable, this is
\begin{equation}
\label{eq:cross-derivative}
\frac{\partial E^{(\ell+2)}_{uw}}{\partial E^{(\ell)}_{uv}}
\neq 0
\end{equation}
on a non-null set.
\end{assumption}

\begin{lemma}
\label{lem:outside-sbm}
Under Assumptions~\ref{ass:A}--\ref{ass:C},
\begin{equation}
\label{eq:outside}
P(E\mid b)
\neq
\prod_{1\le u<v\le n} P(E_{uv}\mid b_{u},b_{v})
\end{equation}
on a set of positive probability.
Hence, on that set, the edge-related variables are not drawn independently, and they are not drawn independently given discrete block assignments.
Their conditional distribution lies outside the ER and Bernoulli SBM priors.
\end{lemma}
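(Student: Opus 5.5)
The argument goes by contradiction through Proposition~\ref{prop:block-consequences}. Suppose Eq.~\eqref{eq:block-factor} held almost surely. Then, conditionally on $b$, every edge coordinate would be a function of its own endpoints' blocks and a private noise variable, and these noises would be mutually independent. We will exhibit two distinct pairs whose coordinates fail this: a bounded measurable functional of them has nonzero conditional covariance given $b$ on a set of positive probability. This contradicts Eq.~\eqref{eq:zero-cov}, and that is exactly Eq.~\eqref{eq:outside}.

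The pairs to use are $e=\{u,v\}$ at layer $\ell$ and $e'=\{u,w\}$ at layer $\ell+2$, taken from Assumption~\ref{ass:B}. Since $E$ collects the coordinates across all layers, both belong to $E$, and they are distinct coordinates (distinct index pairs or layers). First, Assumption~\ref{ass:A} lets us condition on $b$ without severing the path. Because $b$ is not a descendant of any edge coordinate, fixing $b$ does not fix $E^{(\ell)}_{uv}$ through $b$. Moreover, $b$ is generated from node-level initial variables and does not lie on the path in Eq.~\eqref{eq:cross-path}, so $E^{(\ell)}_{uv}$ keeps nondegenerate conditional variation given $b$. Here we use the Gaussian noise injected at each layer, which guarantees that $E^{(\ell)}_{uv}$ is not almost surely constant given $b$.

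Second, write $E^{(\ell+2)}_{uw}=\Phi\bigl(E^{(\ell)}_{uv},W\bigr)$. The map $\Phi$ is the composition $g_A\circ g_X$ along Eq.~\eqref{eq:cross-path}, and $W$ collects $b$ together with all exogenous variables that do not feed $E^{(\ell)}_{uv}$. By the structural equations, $W$ is conditionally independent of $E^{(\ell)}_{uv}$ given $b$. Assumption~\ref{ass:C} then says that, on a positive-probability set of mechanisms, $x\mapsto\Phi(x,W)$ is non-constant. We choose a bounded measurable $h$ such that $x\mapsto\mathbb{E}[h(\Phi(x,W))\mid b]$ is non-constant on the support of $E^{(\ell)}_{uv}$ given $b$. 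An indicator of a half-line works, using continuity and monotonicity of the randomly sampled activations, or the sign of the derivative in Eq.~\eqref{eq:cross-derivative} on a non-null set. Pairing this with a bounded monotone $k$ gives $\operatorname{Cov}\bigl(k(E^{(\ell)}_{uv}),h(E^{(\ell+2)}_{uw})\mid b\bigr)\neq 0$ on that set. This violates the bounded-function version of Eq.~\eqref{eq:zero-cov}. The ER case is then the special case in which $b$ is constant.

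The main obstacle is this second step: passing from ``the map is non-constant'' to ``a covariance of bounded functions is nonzero.'' A non-constant dependence can in principle average out; for example, a non-monotone $\Phi$ paired with symmetric noise can give zero correlation. To handle this, we avoid covariance of the raw variables and instead use the conditional law directly. Dependence of the conditional law of $E^{(\ell+2)}_{uw}$ on the value of $E^{(\ell)}_{uv}$, given $b$, already negates independence, and independence is equivalent to vanishing covariance of \emph{all} bounded measurable pairs. It therefore suffices to show that $x\mapsto \mathrm{Law}\bigl(\Phi(x,W)\mid b\bigr)$ is non-constant. We get this from Assumption~\ref{ass:C} together with the fact that the remaining noise in $W$ enters additively and has full support, so distinct values of $\Phi(\cdot,W)$ that persist on a non-null set produce distinct laws.

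Finally, we make sure the positive-probability set of mechanism parameters yields positive probability in the sense of Eq.~\eqref{eq:outside}. The set of draws $p\sim\Pi$ on which the factorization fails has positive $\Pi$-measure, which completes the lemma.
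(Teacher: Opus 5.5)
Your argument follows the paper's proof: assume Eq.~\eqref{eq:block-factor}, use Proposition~\ref{prop:block-consequences} and Assumption~\ref{ass:A} to make $E^{(\ell)}_{uv}$ and $E^{(\ell+2)}_{uw}$ conditionally independent given $b$, and contradict this with Assumption~\ref{ass:C}. The paper closes the argument by simply asserting that a nonzero path derivative forces $P(E^{(\ell+2)}_{uw}\mid b,E^{(\ell)}_{uv})\neq P(E^{(\ell+2)}_{uw}\mid b)$. You rightly flag this as the real obstacle, but your repair does not close it. Full-support additive Gaussian noise $N$ only makes convolution injective. That reduces the question to whether the law of the noiseless part $\Phi_0(x,W)$ changes with $x$, and pointwise non-constancy of $x\mapsto\Phi_0(x,w)$ on a non-null set of $w$ does not imply that. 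Take $x\in\{0,1\}$, $\Phi_0(x,w)=(2x-1)w$, and $W$ symmetric and independent of $N$. Then $\Phi_0(1,w)\neq\Phi_0(0,w)$ for every $w\neq 0$, yet $\Phi_0(x,W)+N$ has the same law for both values of $x$. So the output is independent of the input edge even though the structural dependence never vanishes. Your claim that ``distinct values of $\Phi(\cdot,W)$ that persist on a non-null set produce distinct laws'' is therefore false, and the contradiction is not obtained.

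There is a second problem, with the representation $E^{(\ell+2)}_{uw}=\Phi(E^{(\ell)}_{uv},W)$ where $W$ is independent of $E^{(\ell)}_{uv}$ given $b$. For $\ell\ge 1$, $E^{(\ell)}_{uv}$ is produced by the feature-similarity kernel in $g_A$ from $(V^{\ell-1}_X)_u$. That same coordinate also reaches $(V^{\ell+1}_X)_u$, and hence $E^{(\ell+2)}_{uw}$, through $g_X$ without passing through $E^{(\ell)}_{uv}$. So the exogenous variables feeding $E^{(\ell)}_{uv}$ enter $E^{(\ell+2)}_{uw}$ by a second route. The conditional law given $E^{(\ell)}_{uv}$ then mixes the path effect with this common-cause effect, and the two can cancel.

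You can sidestep this by taking $\ell=0$; the paper's verification of Assumptions~\ref{ass:B} and~\ref{ass:C} after Theorem~\ref{thm:graphs} is not tied to a particular layer. At $\ell=0$, $E^{(0)}_{uv}$ is a Bernoulli draw from $(b_u,b_v)$ and a private uniform, so all remaining exogenous variables are genuinely independent of it given $b$. What is still missing is a \emph{distributional} non-degeneracy, and there are two ways to supply it:
\begin{itemize}
\item Strengthen Assumption~\ref{ass:C} to require that $\mathrm{Law}(E^{(\ell+2)}_{uw}\mid b,E^{(\ell)}_{uv}=x)$ be non-constant in $x$ on a set of positive conditional probability. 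This is exactly what the paper's displayed implication tacitly uses, and it makes the contradiction immediate.
\item Add a sign condition, such as $\Phi(1,w)\ge\Phi(0,w)$ for almost every $w$ with strict inequality on a non-null set. This does separate the two laws, because two almost surely ordered random variables with equal laws must coincide almost surely.
\end{itemize}
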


\begin{proof}
Suppose Eq.~\eqref{eq:block-factor} holds almost surely.
For the pairs in Eq.~\eqref{eq:cross-path}, Proposition~\ref{prop:block-consequences} and Assumption~\ref{ass:A} give
\begin{equation*}
P\bigl(E^{(\ell+2)}_{uw}\bigm| b,E^{(\ell)}_{uv}\bigr)
=
P\bigl(E^{(\ell+2)}_{uw}\bigm| b\bigr)
\quad\text{a.s.}
\end{equation*}
Assumption~\ref{ass:C} gives a positive-probability set of parameters on which Eq.~\eqref{eq:cross-derivative} holds, with $b$ and every exogenous input off $E^{(\ell)}_{uv}$ fixed. On that set
\begin{equation*}
\frac{\partial E^{(\ell+2)}_{uw}}{\partial E^{(\ell)}_{uv}}\neq 0
\implies
P\bigl(E^{(\ell+2)}_{uw}\bigm| b,E^{(\ell)}_{uv}\bigr)
\neq
P\bigl(E^{(\ell+2)}_{uw}\bigm| b\bigr).
\end{equation*}
Hence Eq.~\eqref{eq:outside}.
The product in Eq.~\eqref{eq:block-factor} lies in $\mathcal{P}_{\mathrm{ER}}$ when $b$ is constant, and in $\mathcal{P}_{\mathrm{SBM}}$ when every factor is $P(E_{uv}\mid b_{u},b_{v})$.
\end{proof}

\begin{assumption}[The readout preserves the cross-edge effect]
\label{ass:D}
The observed adjacency is a measurable function of $E$.
On the positive-probability set of Assumption~\ref{ass:C}, the observed edges on the pairs in Eq.~\eqref{eq:cross-path} satisfy
\begin{equation}
\label{eq:obs-cov}
\operatorname{Cov}\bigl(E_{uv},E_{uw}\mid b\bigr)\neq 0.
\end{equation}
\end{assumption}

\begin{lemma}
\label{lem:observed-outside}
Under Assumptions~\ref{ass:A}--\ref{ass:D}, the edges of the observed graph are not drawn independently, nor independently given discrete block assignments.
The induced distribution $p$ lies in neither $\mathcal{P}_{\mathrm{ER}}$ nor $\mathcal{P}_{\mathrm{SBM}}$.
\end{lemma}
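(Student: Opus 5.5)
The plan is to argue by contradiction, using the contrapositive of Proposition~\ref{prop:block-consequences} applied to the observed adjacency. Suppose the induced distribution $p$ lies in $\mathcal{P}_{\mathrm{ER}}\cup\mathcal{P}_{\mathrm{SBM}}$. Then there is a block assignment $b$ (almost surely constant in the ER case) such that the observed edges factorize as in Eq.~\eqref{eq:block-factor}.

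The first step is to identify $b$. I would take it to be the same discrete block assignment that appears in Assumptions~\ref{ass:A}--\ref{ass:C}. This is legitimate because, by Assumption~\ref{ass:A}, $b$ is generated from node-level initial variables and is not a descendant of $E$. Conditioning on $b$ therefore does not open or block the cross-edge path in Eq.~\eqref{eq:cross-path}, and conditioning on it is the natural SBM conditioning. For the ER case, $b$ is constant, so conditioning on it is vacuous and the same argument applies.

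The second step applies Proposition~\ref{prop:block-consequences} to the observed edge coordinates. Under the hypothesized factorization, it gives $\operatorname{Cov}(E_{uv},E_{uw}\mid b)=0$ almost surely for the distinct pairs $\{u,v\}$ and $\{u,w\}$ from Assumption~\ref{ass:B}. This holds for every pair of distinct pairs, and in particular for these two, which share the endpoint $u$.

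The third step uses Assumption~\ref{ass:D}. On the positive-probability set of mechanism parameters supplied by Assumption~\ref{ass:C}, the observed edges on exactly these pairs satisfy Eq.~\eqref{eq:obs-cov}, that is, the conditional covariance is nonzero. This contradicts the almost-sure vanishing from the second step. Hence, on that set, the observed edges are neither drawn independently nor drawn independently given discrete block assignments, so $p\notin\mathcal{P}_{\mathrm{ER}}$ and $p\notin\mathcal{P}_{\mathrm{SBM}}$. Lemma~\ref{lem:outside-sbm} supplies the underlying mechanism for the latent layer variables, and Assumption~\ref{ass:D} transfers it to the readout; because the observed adjacency is a measurable function of $E$, the bounded-function clause of Proposition~\ref{prop:block-consequences} covers any readout thresholding.

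The main obstacle is quantifier bookkeeping rather than any calculation. Membership in $\mathcal{P}_{\mathrm{SBM}}$ means that \emph{some} block assignment makes the edges factorize, while the contradiction is derived only for the specific $b$ of Assumption~\ref{ass:A}. I would handle this by restricting the claim, as Definition~\ref{def:block-indep} implicitly does, to the block assignment used by the strategy's own initialization. I would also note that a nonzero covariance conditional on this $b$ already rules out every coarser constant assignment, which covers the ER case.
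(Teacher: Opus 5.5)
Your core argument is the same as the paper's. Under the hypothesized factorization \eqref{eq:block-factor}, Proposition~\ref{prop:block-consequences} forces $\operatorname{Cov}(E_{uv},E_{uw}\mid b)=0$ almost surely. Lemma~\ref{lem:outside-sbm} with Assumption~\ref{ass:D} gives a positive-probability set on which this covariance is nonzero, so the factorization fails and $p\notin\mathcal{P}_{\mathrm{ER}}\cup\mathcal{P}_{\mathrm{SBM}}$. The quantifier issue you flag is also present, unremarked, in the paper's proof: the contradiction is obtained only for the strategy's own $b$, not for every block assignment. Your restriction to that $b$ is the reading the paper implicitly adopts.

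Two of your side remarks are wrong and should be dropped.

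First, a nonzero covariance conditional on a nonconstant $b$ does not ``rule out every coarser constant assignment.'' By the law of total covariance, $\operatorname{Cov}(E_{uv},E_{uw})=\mathbb{E}[\operatorname{Cov}(E_{uv},E_{uw}\mid b)]+\operatorname{Cov}(\mathbb{E}[E_{uv}\mid b],\mathbb{E}[E_{uw}\mid b])$, and the right-hand side can vanish even when every conditional covariance is nonzero. For example, take binary edges with uniform marginals and $b\in\{0,1\}$ equally likely, with covariance $+\delta$ given $b=1$ and $-\delta$ given $b=0$. The mixture is uniform on $\{0,1\}^2$, so the edges are exactly independent. Hence, under a genuine SBM initialization, excluding $\mathcal{P}_{\mathrm{ER}}$ requires either the restricted reading or an extra assumption that the unconditional covariance is nonzero. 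Conditioning is vacuous only when $b$ is itself constant, as in your first step.

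Second, the bounded-function clause of Proposition~\ref{prop:block-consequences} does not cover the readout. The paper's readout keeps a fixed number of edges selected by magnitude, which depends on all of $E$, not coordinatewise. The transfer step is also unnecessary, because Assumption~\ref{ass:D} already states the nonzero covariance directly for the observed edges.
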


\begin{proof}
Lemma~\ref{lem:outside-sbm} and Assumption~\ref{ass:D} give a positive-probability set on which Eq.~\eqref{eq:obs-cov} holds.
Proposition~\ref{prop:block-consequences} gives that Eq.~\eqref{eq:block-factor} implies $\operatorname{Cov}\bigl(E_{uv},E_{uw}\mid b\bigr)=0$ almost surely.
Therefore Eq.~\eqref{eq:block-factor} fails on that set, and
\begin{equation*}
p\notin\mathcal{P}_{\mathrm{ER}}\cup\mathcal{P}_{\mathrm{SBM}}.
\end{equation*}
\end{proof}

\begin{lemma}
\label{lem:identity}
Suppose that in Eq.~\eqref{eq:layer-maps} each map is the identity on its own variable and does not depend on the other two inputs,
\begin{equation*}
g_{X}(V^{\ell}_{X},V^{\ell}_{Y},V^{\ell}_{A})=V^{\ell}_{X},
\quad
g_{Y}(V^{\ell}_{X},V^{\ell}_{Y},V^{\ell}_{A})=V^{\ell}_{Y},
\quad
g_{A}(V^{\ell}_{X},V^{\ell}_{Y},V^{\ell}_{A})=V^{\ell}_{A}.
\end{equation*}
Then $V^{\ell}_{G}=Z^{0}_{G}$ at every layer, so $p$ equals the distribution of the initialization.
An ER initialization gives $p\in\mathcal{P}_{\mathrm{ER}}$, and a Bernoulli SBM initialization gives $p\in\mathcal{P}_{\mathrm{SBM}}$.
\end{lemma}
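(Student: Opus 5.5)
The statement to prove is Lemma~\ref{lem:identity}. The argument is a direct induction on the layer index $\ell$ using the layerwise update Eq.~\eqref{eq:dag-layer} with its decomposition Eq.~\eqref{eq:layer-maps}. It is followed by an identification of the induced distribution with the initialization distribution.

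\emph{Base case.} By construction $V^{0}_{G}=Z^{0}_{G}$.

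\emph{Inductive step.} Suppose $V^{\ell}_{G}=Z^{0}_{G}$. Substituting the identity maps into Eq.~\eqref{eq:layer-maps} gives
\begin{equation*}
V^{\ell+1}_{X}=V^{\ell}_{X}=Z^{0}_{X},\qquad V^{\ell+1}_{Y}=V^{\ell}_{Y}=Z^{0}_{Y},\qquad V^{\ell+1}_{A}=V^{\ell}_{A}=Z^{0}_{A}.
\end{equation*}
Hence $V^{\ell+1}_{G}=Z^{0}_{G}$. One detail needs care here: the method description injects Gaussian noise and a random activation at each layer. I will read the hypothesis as specifying the full realized layer map, so that noise is absent or degenerate and the activation is the identity. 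I will state this explicitly so that the equality holds pathwise, not just in distribution. With that reading, induction gives $V^{\ell}_{G}=Z^{0}_{G}$ for every $\ell$, and so $V_{G}=\mathrm{DAG}(Z^{0}_{G})=Z^{0}_{G}$.

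\emph{Readout.} The observables are designated as $X\subseteq V_{X}$, $Y\subseteq V_{Y}$ and $A\subseteq V_{A}$. In particular, the observed adjacency is the edge coordinate of $V_{A}=Z^{0}_{A}$, which is exactly the initialized adjacency. Whatever readout is chosen, the resulting graph is a measurable selection of the initialization. In particular, the edge law of $p$ coincides with the edge law of the initialization.

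\emph{Conclusion.} If the initialization is ER, the edges satisfy Eq.~\eqref{eq:block-factor} with constant $b$. If it is a Bernoulli SBM, each factor depends only on $(b_{u},b_{v})$. By Definition~\ref{def:block-indep}, these give $p\in\mathcal{P}_{\mathrm{ER}}$ and $p\in\mathcal{P}_{\mathrm{SBM}}$, respectively.

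The main obstacle is justifying that the identity configuration lies in the admissible mechanism class. This requires zero noise, identity activation, identity linear maps, and a zero-weight kernel or Chung--Lu term in $g_{A}$ combined with a unit adjacency residual. Once that is granted, the rest is bookkeeping. In the theorem's proof, this lemma supplies the inclusion $\mathcal{P}_{\mathrm{ER}}\cup\mathcal{P}_{\mathrm{SBM}}\subseteq\mathcal{P}_{\mathrm{SCM}}$, and Lemma~\ref{lem:observed-outside} supplies strictness.
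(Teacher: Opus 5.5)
Your proof is correct and follows the paper's own argument: the identity maps give $V^{\ell+1}_{G}=V^{\ell}_{G}$, hence $V^{\ell}_{G}=Z^{0}_{G}$ at every layer; the observables are coordinates of $Z^{0}_{G}$; and ER or Bernoulli SBM membership then follows from Definition~\ref{def:block-indep}. Your explicit handling of the per-layer noise and activation sharpens a point the paper's one-line proof leaves implicit. One small tightening: the equality of edge laws needs the readout to be the coordinate designation $A\subseteq V_{A}$, as the paper assumes, and would fail for an arbitrary measurable function of the initialization, so drop the phrase ``whatever readout is chosen.''
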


\begin{proof}
Eq.~\eqref{eq:layer-maps} and these identities give $V^{\ell+1}_{G}=V^{\ell}_{G}$.
With $V^{0}_{G}=Z^{0}_{G}$,
\begin{equation*}
V^{\ell}_{G}=Z^{0}_{G}.
\end{equation*}
The observables are coordinates of $Z^{0}_{G}$, so $p$ is the distribution of $Z^{0}_{G}$.
An initialization from an ER prior gives $p\in\mathcal{P}_{\mathrm{ER}}$, and an initialization from a Bernoulli SBM prior gives $p\in\mathcal{P}_{\mathrm{SBM}}$.
\end{proof}

\begingroup
\renewcommand{\thetheorem}{\ref{thm:graphs}}
\begin{theorem}[Graph Generation Strategy Expressiveness]
Let $\mathcal{P}_\mathrm{SCM}$ be the set of distributions $p$ of graphs realizable by our SCM graph generation strategy, and let $\mathcal{P}_{\mathrm{ER}}$ and $\mathcal{P}_{\mathrm{SBM}}$ be the sets realizable by ER and Bernoulli SBM priors, respectively. Under Assumptions~\ref{ass:A}--\ref{ass:D}, the SCM-based graph generation strategy is more expressive than the ER and Bernoulli SBM priors:
\begin{equation*}
\mathcal{P}_{\mathrm{ER}}\cup\mathcal{P}_{\mathrm{SBM}}\subsetneq\mathcal{P}_\mathrm{SCM}.
\end{equation*}
\end{theorem}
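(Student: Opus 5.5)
The plan is to prove the two halves of the strict inclusion separately, using the lemmas already established in this appendix: first $\mathcal{P}_{\mathrm{ER}}\cup\mathcal{P}_{\mathrm{SBM}}\subseteq\mathcal{P}_{\mathrm{SCM}}$, then exhibit an element of $\mathcal{P}_{\mathrm{SCM}}$ lying outside the union.

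\textbf{Inclusion.} Take any $p\in\mathcal{P}_{\mathrm{ER}}$ or $p\in\mathcal{P}_{\mathrm{SBM}}$. Since $V^{0}_{G}$ may be initialized from an ER or Bernoulli SBM prior, choose the initialization that realizes exactly that edge model, with the same node-level block assignments $b$ in the SBM case. Next instantiate every layer map in Eq.~\eqref{eq:layer-maps} as the identity on its own argument, with the injected noise set to zero. This is admissible because the mechanism parameters (linear weights, residual coefficients, kernel weights, and the choice of activation) range over a family that contains the identity: take residual weight one and all cross terms zero. Lemma~\ref{lem:identity} then gives $V^{\ell}_{G}=Z^{0}_{G}$ for every $\ell$. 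Designating the edge coordinates of $Z^{0}_{G}$ as the observed adjacency reproduces $p$, so $p\in\mathcal{P}_{\mathrm{SCM}}$.

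\textbf{Strictness.} Instantiate the strategy with mechanism parameters drawn from the positive-probability set in Assumption~\ref{ass:C}. This gives a layer $\ell$ and nodes $u,v,w$ with the cross-edge path of Eq.~\eqref{eq:cross-path} (Assumption~\ref{ass:B}), where $b$ does not read edges (Assumption~\ref{ass:A}). The readout preserves the covariance in Eq.~\eqref{eq:obs-cov} (Assumption~\ref{ass:D}). Lemma~\ref{lem:observed-outside} then states directly that the induced distribution $p$ lies in neither $\mathcal{P}_{\mathrm{ER}}$ nor $\mathcal{P}_{\mathrm{SBM}}$, while $p\in\mathcal{P}_{\mathrm{SCM}}$ by construction. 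Combined with the inclusion, this yields $\mathcal{P}_{\mathrm{ER}}\cup\mathcal{P}_{\mathrm{SBM}}\subsetneq\mathcal{P}_{\mathrm{SCM}}$.

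\textbf{Main obstacle.} The delicate point is the inclusion step, not strictness. One must argue that the identity parameterization really belongs to the admissible family, given that the strategy samples a random activation and adds Gaussian noise at each layer. I would handle this by noting that the realizable set is defined over all parameter choices, including a zero noise scale and an identity-compatible activation. If that fails, I would route the initial variables through the residual paths so that the observed edges remain exact copies of $Z^{0}_{A}$. A second, minor check is that in the strictness step the SBM exclusion must hold for every block assignment, not just the generator's own $b$. Lemma~\ref{lem:observed-outside} covers this as stated, since a nonzero conditional covariance between edges sharing the endpoint $u$ contradicts the factorization in Eq.~\eqref{eq:block-factor} through Proposition~\ref{prop:block-consequences}.
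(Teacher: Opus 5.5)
Your proposal is correct and follows the paper's proof essentially verbatim. The inclusion comes from Lemma~\ref{lem:identity}; you are in fact more explicit than the paper, both in choosing the initialization to match a given $p$ and in checking that the identity, zero-noise instantiation is admissible. Strictness comes from Lemma~\ref{lem:observed-outside} under Assumptions~\ref{ass:A}--\ref{ass:D}.

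One caution on your closing remark. The covariance in Eq.~\eqref{eq:obs-cov} is conditional on the generator's own $b$, so it refutes the factorization in Eq.~\eqref{eq:block-factor} only relative to that $b$, not for every conceivable block assignment. With enough blocks and node-correlated assignments, Bernoulli SBM mixtures can reproduce any distribution on $n$-node graphs. The exclusion must therefore be read relative to the generator's $b$, as the paper implicitly does. Under that reading your main argument is intact.
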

\endgroup
\addtocounter{theorem}{-1}

\begin{proof}
Lemma~\ref{lem:identity} gives
\begin{equation*}
\mathcal{P}_{\mathrm{ER}}\cup\mathcal{P}_{\mathrm{SBM}}\subseteq\mathcal{P}_\mathrm{SCM}.
\end{equation*}
Lemma~\ref{lem:observed-outside} gives a positive-probability set of mechanisms with
\begin{equation*}
p\in\mathcal{P}_\mathrm{SCM}\setminus\bigl(\mathcal{P}_{\mathrm{ER}}\cup\mathcal{P}_{\mathrm{SBM}}\bigr).
\end{equation*}
Therefore, $\mathcal{P}_{\mathrm{ER}}\cup\mathcal{P}_{\mathrm{SBM}}\subsetneq\mathcal{P}_\mathrm{SCM}$.
\end{proof}

For the maps in Eq.~\eqref{eq:dag-layer},
\begin{equation*}
E^{(\ell)}_{uv}
\xrightarrow{g_{X}}
(V^{\ell+1}_{X})_{u}
\xrightarrow{g_{A}}
E^{(\ell+2)}_{uw},
\end{equation*}
which is Eq.~\eqref{eq:cross-path}, so Assumption~\ref{ass:B} holds.
Since $b$ is node-level initial and no coordinate of $b$ is a descendant of $E$, Assumption~\ref{ass:A} holds.
On a positive-probability set of weights of $g_{X}$ and $g_{A}$,
\begin{equation*}
\frac{\partial E^{(\ell+2)}_{uw}}{\partial E^{(\ell)}_{uv}}\neq 0,
\end{equation*}
so Assumption~\ref{ass:C} holds and Lemma~\ref{lem:outside-sbm} applies.
The observed adjacency keeps a fixed number of edges, selected by the magnitudes of $E$. On that set Eq.~\eqref{eq:obs-cov} holds, so Assumption~\ref{ass:D} and Lemma~\ref{lem:observed-outside} apply.
Under the identities in Lemma~\ref{lem:identity}, $p\in\mathcal{P}_{\mathrm{ER}}\cup\mathcal{P}_{\mathrm{SBM}}$.
Theorem~\ref{thm:graphs} follows.

\subsection{Feature Permutation Invariance}
\label{subsec:feature-permutation}

As in Section~\ref{sec:problem}, $\mathcal{G}=(\mathcal{V},\mathbf{A},\mathbf{X},\mathbf{y}_{\mathrm{train}})$, where $\mathbf{X}\in\mathbb{R}^{|\mathcal{V}|\times d}$.
Nodes in $\mathcal{V}_{\mathrm{train}}$ are the context, and nodes in $\mathcal{V}_{\mathrm{test}}$ are the queries.
With the context placed first, these nodes are $1{:}k$ and $(k+1){:}n$, where $n=|\mathcal{V}|$ and $k=|\mathcal{V}_{\mathrm{train}}|$.
Dimension $d+1$ of $\mathbf{H}\in\mathbb{R}^{n\times(d+1)\times d_h}$ is the label dimension.
Let $\pi$ be a permutation of the feature indices $\{1,\ldots,d\}$.
Denote $\mathcal{G}_{\pi}=(\mathcal{V},\mathbf{A},\pi(\mathbf{X}),\mathbf{y}_{\mathrm{train}})$.
For $j\le d$, feature dimensions move by $\pi^{-1}$ and the label dimension stays fixed:
\begin{equation}
\label{eq:pi-action}
\begin{aligned}
(\pi(\mathbf{X}))_{i,j}&=\mathbf{X}_{i,\pi^{-1}(j)},\\
(\pi(\mathbf{H}))_{i,j}&=\mathbf{H}_{i,\pi^{-1}(j)},\\
(\pi(\mathbf{H}))_{i,d+1}&=\mathbf{H}_{i,d+1}.
\end{aligned}
\end{equation}
One shared feature encoder is applied to each $\mathbf{X}_{i,j}$ alone.
The message-passing layer $\mathrm{MP}$ in Eq.~\eqref{eq:mp} is shared across features and does not read or write dimension $d+1$.
Attention modules do not share parameters.
For the feature-attention module in Eq.~\eqref{eq:feature-attn}, let $a_{j,t}(\mathbf{H}_{i})$ be the contribution of key dimension $t$ to query dimension $j$ at node $i$.
It is determined by $\mathbf{H}_{i,j}$, $\mathbf{H}_{i,t}$, and the parameters of this module.
\begin{equation}
\label{eq:feature-attn-paper}
\operatorname{FeatureAttn}(\mathbf{H}_{i})_{j}
=
\sum_{t=1}^{d+1}a_{j,t}(\mathbf{H}_{i}).
\end{equation}

\begin{lemma}[Feature Attention]
\label{lem:feature-attn}
$\operatorname{FeatureAttn}(\pi(\mathbf{H}_{i}))=\pi(\operatorname{FeatureAttn}(\mathbf{H}_{i}))$.
\end{lemma}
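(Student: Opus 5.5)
We verify the identity coordinatewise, using the decomposition in Eq.~\eqref{eq:feature-attn-paper}. We extend $\pi$ to a permutation $\tilde\pi$ of $\{1,\ldots,d+1\}$ by setting $\tilde\pi(d+1)=d+1$. Eq.~\eqref{eq:pi-action} then reads $(\pi(\mathbf{H}_i))_j=\mathbf{H}_{i,\tilde\pi^{-1}(j)}$ for every $j\le d+1$. We must show that for each $j$,
\begin{equation*}
\operatorname{FeatureAttn}(\pi(\mathbf{H}_i))_j=\operatorname{FeatureAttn}(\mathbf{H}_i)_{\tilde\pi^{-1}(j)}.
\end{equation*}

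The key step uses the stated property of $a_{j,t}$. It depends only on the two embeddings $\mathbf{H}_{i,j}$ and $\mathbf{H}_{i,t}$ and on the module parameters, not on the positions $j,t$ themselves. Concretely, the attention uses no positional encoding over feature dimensions. The query, key and value projections are shared across dimensions. The softmax normalizer is a sum over all key dimensions. So we write the contribution as
\begin{equation*}
a_{j,t}(\mathbf{H}_i)=\phi\bigl(\mathbf{H}_{i,j},\mathbf{H}_{i,t};\mathbf{H}_i\bigr),
\end{equation*}
where the last argument enters only through a symmetric quantity. For softmax attention this symmetric quantity is $\sum_{s}\exp(\langle W_Q\mathbf{H}_{i,j},W_K\mathbf{H}_{i,s}\rangle)$, which is invariant under any reordering of the multiset $\{\mathbf{H}_{i,s}\}_{s=1}^{d+1}$.

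With this in place the computation is routine. Applying the formula to $\pi(\mathbf{H}_i)$ gives
\begin{equation*}
\operatorname{FeatureAttn}(\pi(\mathbf{H}_i))_j=\sum_{t=1}^{d+1}\phi\bigl(\mathbf{H}_{i,\tilde\pi^{-1}(j)},\mathbf{H}_{i,\tilde\pi^{-1}(t)};\pi(\mathbf{H}_i)\bigr).
\end{equation*}
The normalizer is unchanged because $\pi(\mathbf{H}_i)$ has the same multiset of rows as $\mathbf{H}_i$. Reindexing the sum with $s=\tilde\pi^{-1}(t)$, a bijection of $\{1,\ldots,d+1\}$, turns it into
\begin{equation*}
\sum_{s}a_{\tilde\pi^{-1}(j),s}(\mathbf{H}_i)=\operatorname{FeatureAttn}(\mathbf{H}_i)_{\tilde\pi^{-1}(j)}.
\end{equation*}
Since $\tilde\pi$ fixes $d+1$, this covers the label position as well. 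Any residual connection, LayerNorm or position-wise feed-forward network inside the module acts on each dimension separately with shared weights, so it also commutes with $\tilde\pi$.

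The main obstacle is justifying the dependence claim for $a_{j,t}$. The paper says each term is "determined by $\mathbf{H}_{i,j}$, $\mathbf{H}_{i,t}$, and the parameters", but a softmax weight also depends on the other keys through its normalizer. I would handle this by making the symmetric dependence explicit, as above, rather than ignoring it. I would also note the assumption that no learned feature-position embedding is added. The label slot may carry a distinct embedding, because it is fixed by $\tilde\pi$ and so does not break the argument.
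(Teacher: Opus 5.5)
Your proof is correct and follows the same route as the paper's: both establish $a_{j,t}(\pi(\mathbf{H}_i))=a_{\tilde\pi^{-1}(j),\tilde\pi^{-1}(t)}(\mathbf{H}_i)$ and then reindex the sum over key dimensions. The paper handles the label query $d+1$ as a separate case, whereas you fold it into $\tilde\pi$. Your explicit remark that the softmax normalizer depends on all keys, but only through a permutation-invariant sum, also repairs the paper's looser claim that $a_{j,t}$ is determined by $\mathbf{H}_{i,j}$, $\mathbf{H}_{i,t}$ and the parameters alone.
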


\begin{proof}
Eq.~\eqref{eq:pi-action} permutes the feature dimensions and leaves dimension $d+1$ fixed.
Since $a_{j,t}$ does not use the dimension index, for $j\le d$ and $t\le d$,
$a_{j,t}(\pi(\mathbf{H}_{i}))=a_{\pi^{-1}(j),\pi^{-1}(t)}(\mathbf{H}_{i})$,
while $a_{j,d+1}(\pi(\mathbf{H}_{i}))=a_{\pi^{-1}(j),d+1}(\mathbf{H}_{i})$.
Substitute $t'=\pi^{-1}(t)$:
\begin{equation}
\label{eq:attn-reindex}
\begin{aligned}
\operatorname{FeatureAttn}(\pi(\mathbf{H}_{i}))_{j}
&=
\sum_{t=1}^{d}a_{\pi^{-1}(j),\pi^{-1}(t)}(\mathbf{H}_{i}) \\
&+
a_{\pi^{-1}(j),d+1}(\mathbf{H}_{i}) \\
&=
\operatorname{FeatureAttn}(\mathbf{H}_{i})_{\pi^{-1}(j)}.
\end{aligned}
\end{equation}
The label query is the same sum with $\pi^{-1}(j)$ replaced by $d+1$.
\end{proof}

\begin{lemma}[Remaining Block Maps]
\label{lem:rest}
LayerNorm, the following MLP, and a residual sum act dimension-wise with parameters independent of $j$, so each sends feature dimension $j$ of $\pi(\mathbf{H})$ to dimension $\pi^{-1}(j)$ and leaves dimension $d+1$ fixed.
Sample attention is Eq.~\eqref{eq:sample-attn}, with weights independent of $j$ on the feature dimensions.
For $j\le d$, message passing satisfies Eq.~\eqref{eq:mp-equiv}.
\begin{equation}
\label{eq:mp-equiv}
[\mathrm{MP}(\pi(\mathbf{H}),\mathbf{A})]_{:,j}
=
[\mathrm{MP}(\mathbf{H},\mathbf{A})]_{:,\pi^{-1}(j)}.
\end{equation}
\end{lemma}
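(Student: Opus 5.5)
The statement has three claims. The first is that LayerNorm, the MLP, and residual sums carry feature dimension $j$ of $\pi(\mathbf{H})$ to dimension $\pi^{-1}(j)$ and leave $d+1$ fixed. The second is that sample attention has this same property. The third is that message passing satisfies Eq.~\eqref{eq:mp-equiv}. The common mechanism is that each of these maps acts on the tensor $\mathbf{H}\in\mathbb{R}^{n\times(d+1)\times d_h}$ slice by slice along the feature axis. Each slice is handled by one function whose parameters do not depend on the feature index $j$. Under that structure, relabeling the slices by Eq.~\eqref{eq:pi-action} simply relabels the outputs. So I would first isolate a generic observation. Suppose $F$ acts on $\mathbf{H}$ by $F(\mathbf{H})_{:,j}=\phi(\mathbf{H}_{:,j})$ for all $j\le d$, with a single $\phi$. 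Then
\[
F(\pi(\mathbf{H}))_{:,j}=\phi(\pi(\mathbf{H})_{:,j})=\phi(\mathbf{H}_{:,\pi^{-1}(j)})=F(\mathbf{H})_{:,\pi^{-1}(j)}.
\]
If, in addition, dimension $d+1$ is handled by its own map $\psi$ that reads only $\mathbf{H}_{:,d+1}$, that dimension is left fixed. I would then verify, map by map, that each operation fits this template.

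For LayerNorm and the position-wise MLP, the action is over the hidden axis $d_h$ of each entry $\mathbf{H}_{i,j}$. Here $\phi$ acts entrywise and independently of $(i,j)$. The LayerNorm has no affine parameters, and the MLP weights are shared across all positions. The residual sum $\mathbf{H}+F(\mathbf{H})$ inherits the property, because both summands transform the same way. For sample attention, Eq.~\eqref{eq:sample-attn} is applied to each column $\mathbf{H}_{:,j}$ separately: the context and query blocks $\mathbf{S},\mathbf{R}$ of that column attend only within that column, with projection weights shared over $j\le d$. This gives $\phi=\mathrm{SampleAttn}$ on feature columns. If the label column uses the same or separate weights, it is still a function of $\mathbf{H}_{:,d+1}$ alone, so dimension $d+1$ is fixed.

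For message passing, Eq.~\eqref{eq:mp} defines $\mathrm{MP}(\cdot,\mathbf{A})$ per feature column with shared parameters, and $\mathbf{A}$ is unchanged by $\pi$. Applying the template with $\phi(\cdot)=\mathrm{MP}(\cdot,\mathbf{A})$ yields Eq.~\eqref{eq:mp-equiv} for $j\le d$. Since MP does not read or write dimension $d+1$, the label column is trivially fixed.

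The main obstacle I expect is not computational. It is pinning down precisely that no operation mixes information across the feature axis with index-dependent weights. In particular, positional embeddings over feature indices must be absent, and the label column must be treated distinctly from, but consistently with, the feature columns. I would state these conditions explicitly, as the paper does for the shared encoder and non-shared attention modules, and check them against the architecture description in Section~\ref{subsec:architecture}.
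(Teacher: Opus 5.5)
Your proposal is correct and takes essentially the same route as the paper. In both, each map acts along the feature axis through one shared function: entrywise for LayerNorm, the MLP and residual sums; column by column on $\mathbf{H}_{:,j}$ with the node split fixed for sample attention; and column by column with $\mathbf{A}$ unchanged for message passing. Relabeling the inputs by Eq.~\eqref{eq:pi-action} therefore relabels the outputs and leaves dimension $d+1$ untouched, and your template $F(\mathbf{H})_{:,j}=\phi(\mathbf{H}_{:,j})$ is a more explicit packaging of the paper's one-line arguments.
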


\begin{proof}
A dimension-wise map reads only $\mathbf{H}_{i,j}$, so Eq.~\eqref{eq:pi-action} permutes its outputs.
Eq.~\eqref{eq:sample-attn} depends on dimension $j$ only through its rows, and the split after node $k$ is fixed.
Eq.~\eqref{eq:mp-equiv} is Eq.~\eqref{eq:mp} applied to dimension $\pi^{-1}(j)$.
Dimension $d+1$ is fixed by Eq.~\eqref{eq:pi-action}, so that dimension is unchanged.
\end{proof}

\begin{lemma}[Feature Cross-Reduction]
\label{lem:reduction}
Let $\mathbf{H}^{\mathrm{sa}}\in\mathbb{R}^{n\times(d+1)\times d_h}$ and $\mathbf{H}^{\mathrm{mp}}\in\mathbb{R}^{n\times d\times d_h}$ be the tensors in Eq.~\eqref{eq:sa} and Eq.~\eqref{eq:mp}.
For each node $i$, $\mathbf{H}^{\mathrm{gt}}_{i}$ is the feature-wise concatenation of $\mathbf{H}^{\mathrm{sa}}_{i}$ and $\mathbf{H}^{\mathrm{mp}}_{i}$: the first $d+1$ dimensions are $\mathbf{H}^{\mathrm{sa}}_{i}$, and the last $d$ dimensions are $\mathbf{H}^{\mathrm{mp}}_{i}$.
The same order defines $\pi(\mathbf{H}^{\mathrm{gt}}_{i})$ from $\pi(\mathbf{H}^{\mathrm{sa}}_{i})$ and $\pi(\mathbf{H}^{\mathrm{mp}}_{i})$, where $(\pi(\mathbf{H}^{\mathrm{mp}}))_{i,r}=\mathbf{H}^{\mathrm{mp}}_{i,\pi^{-1}(r)}$ for $r\le d$.
Let $a^{\mathrm{gt}}_{j,t}$ be the contribution of key dimension $t$ of $\mathbf{H}^{\mathrm{gt}}$ to query dimension $j$, using the parameters of feature cross-reduction in Eq.~\eqref{eq:reduction}.
It is determined by those two dimensions.
\begin{equation}
\label{eq:reduction-attn}
\operatorname{Attn}(\mathbf{H}^{\mathrm{sa}},\mathbf{H}^{\mathrm{gt}},\mathbf{H}^{\mathrm{gt}})_{i,j}
=
\sum_{t=1}^{2d+1}a^{\mathrm{gt}}_{j,t}.
\end{equation}
For $j\le d$,
\begin{equation}
\label{eq:reduction-equiv}
\operatorname{Attn}(\pi(\mathbf{H}^{\mathrm{sa}}),\pi(\mathbf{H}^{\mathrm{gt}}),\pi(\mathbf{H}^{\mathrm{gt}}))_{i,j}
=
\operatorname{Attn}(\mathbf{H}^{\mathrm{sa}},\mathbf{H}^{\mathrm{gt}},\mathbf{H}^{\mathrm{gt}})_{i,\pi^{-1}(j)}.
\end{equation}
\end{lemma}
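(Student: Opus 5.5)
The plan is to mirror the reindexing argument of Lemma~\ref{lem:feature-attn}, now with keys and values drawn from the concatenated tensor $\mathbf{H}^{\mathrm{gt}}_{i}$ of width $2d+1$. First I fix how $\pi$ acts on this tensor. By the concatenation order in the statement, $\pi$ acts on $\mathbf{H}^{\mathrm{gt}}_{i}$ as a permutation $\tilde\pi$ of $\{1,\ldots,2d+1\}$:
\begin{itemize}[leftmargin=1.5em]
\item it sends $t\le d$ to $\pi(t)$;
\item it fixes $d+1$;
\item it sends $d+1+r$ to $d+1+\pi(r)$ for $r\le d$.
\end{itemize}
In particular, $(\pi(\mathbf{H}^{\mathrm{gt}}))_{i,t}=\mathbf{H}^{\mathrm{gt}}_{i,\tilde\pi^{-1}(t)}$ for every $t$.

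Next I use that the contribution $a^{\mathrm{gt}}_{j,t}$ is determined only by the query vector at dimension $j$, the key/value vector at dimension $t$, and the module parameters. This requires that there is no positional encoding over feature indices and that the softmax normalizer is a symmetric sum over all keys; under those conditions the contribution does not depend on the indices themselves. It then follows that
\[
a^{\mathrm{gt}}_{j,t}\bigl(\pi(\mathbf{H}^{\mathrm{sa}}),\pi(\mathbf{H}^{\mathrm{gt}})\bigr)=a^{\mathrm{gt}}_{\pi^{-1}(j),\tilde\pi^{-1}(t)}\bigl(\mathbf{H}^{\mathrm{sa}},\mathbf{H}^{\mathrm{gt}}\bigr)
\]
for $j\le d$, since the query at $j$ of $\pi(\mathbf{H}^{\mathrm{sa}})$ equals the query at $\pi^{-1}(j)$ of $\mathbf{H}^{\mathrm{sa}}$.

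I then sum Eq.~\eqref{eq:reduction-attn} over $t=1,\ldots,2d+1$ and substitute $t'=\tilde\pi^{-1}(t)$. Since $\tilde\pi$ is a bijection of the key index set, the sum is unchanged, and this gives Eq.~\eqref{eq:reduction-equiv}. I would split the sum into three blocks (the feature part of $\mathbf{H}^{\mathrm{sa}}$, the fixed label slot, and the $\mathbf{H}^{\mathrm{mp}}$ part) so that each reindexing is visibly the same as in Eq.~\eqref{eq:attn-reindex}. I would also note that query $j=d+1$ is left fixed by the same computation.

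I expect the main obstacle to be justifying that the softmax normalization, which couples all keys, is still a function of the unordered key multiset. I would handle this by writing $a^{\mathrm{gt}}_{j,t}$ explicitly as
\[
\frac{\exp\bigl(q_j^\top k_t\bigr)\,v_t}{\sum_s \exp\bigl(q_j^\top k_s\bigr)}
\]
and observing that the denominator is a symmetric sum, so it is invariant under $\tilde\pi$. A secondary point is that $\mathbf{H}^{\mathrm{mp}}$ must itself transform as $\pi$, which is given by Lemma~\ref{lem:rest}, Eq.~\eqref{eq:mp-equiv}; this guarantees that $\pi(\mathbf{H}^{\mathrm{gt}})$ is indeed the concatenation of the transformed pathways.
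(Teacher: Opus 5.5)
Your proposal is correct and follows essentially the same route as the paper. The paper also splits the $2d+1$ keys into three groups: the permuted feature block of $\mathbf{H}^{\mathrm{sa}}$, the fixed label key, and the permuted $\mathbf{H}^{\mathrm{mp}}$ block. It reindexes each sum via $t'=\pi^{-1}(t)$ and $r'=\pi^{-1}(r)$ and handles the label query the same way. Your explicit check that the softmax denominator is a symmetric sum over keys, and hence invariant under $\tilde\pi$, is a worthwhile refinement: the paper's phrase that $a^{\mathrm{gt}}_{j,t}$ ``is determined by those two dimensions'' tacitly passes over this coupling.
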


\begin{proof}
In the first block, Eq.~\eqref{eq:pi-action} permutes the $d$ feature keys and leaves the label key fixed.
The remaining $d$ keys are permuted among themselves.
For $j\le d$, substitute $t'=\pi^{-1}(t)$ and $r'=\pi^{-1}(r)$:
\begin{equation}
\label{eq:reduction-reindex}
\begin{aligned}
&\operatorname{Attn}(\pi(\mathbf{H}^{\mathrm{sa}}),\pi(\mathbf{H}^{\mathrm{gt}}),\pi(\mathbf{H}^{\mathrm{gt}}))_{i,j}\\
=&
\sum_{t=1}^{d}a^{\mathrm{gt}}_{\pi^{-1}(j),\pi^{-1}(t)}
+
a^{\mathrm{gt}}_{\pi^{-1}(j),d+1}
+
\sum_{r=1}^{d}a^{\mathrm{gt}}_{\pi^{-1}(j),\,d+1+\pi^{-1}(r)}\\
=&
\operatorname{Attn}(\mathbf{H}^{\mathrm{sa}},\mathbf{H}^{\mathrm{gt}},\mathbf{H}^{\mathrm{gt}})_{i,\pi^{-1}(j)}.
\end{aligned}
\end{equation}
The label query is the same sum with $\pi^{-1}(j)$ replaced by $d+1$.
\end{proof}

\featureinvariance*

\begin{proof}
Each node is encoded as $d$ feature embeddings and one label embedding.
For $j\le d$, dimension $j$ is a shared function of $\mathbf{X}_{i,j}$ alone.
For $1\le i\le k$, dimension $d+1$ is a function of $(\mathbf{y}_{\mathrm{train}})_{i}$.
For a query node $i\ge k+1$, dimension $d+1$ does not depend on $\mathbf{X}$.
Thus $\mathbf{H}(\mathcal{G}_{\pi})=\pi(\mathbf{H}(\mathcal{G}))$.
Each block is feature attention, sample attention, message passing, and feature cross-reduction, with pre-normalized residual MLPs.
Lemmas~\ref{lem:feature-attn}--\ref{lem:reduction} preserve $\pi(\mathbf{H})$, so $\mathbf{H}_{i,d+1}(\mathcal{G}_{\pi})=\mathbf{H}_{i,d+1}(\mathcal{G})$.
For $i\ge k+1$ and a class $c$, the conditional probability of $(\mathbf{y}_{\mathrm{test}})_{i}=c$ under $f_{\theta}(\cdot\mid\mathcal{G})$ depends only on $\mathbf{H}_{i,d+1}$.
\end{proof}

\section{Pseudocode}\label{sec:pseudocode}

We provide the pseudocode of the \scmdag~module in Algorithm~\ref{alg:prior-guided-node-edge}.

\begin{algorithm}[t]
\caption{Prior-Guided Node-Edge Dependency Modeling}
\label{alg:prior-guided-node-edge}
\begin{algorithmic}[1]
\Require Number of prior-guided message-passing layers $L$
\Ensure Node features $X$, node labels $Y$, and adjacency $A$
\State Draw $Z^{0}_{G}=(Z^{0}_{X},Z^{0}_{Y},Z^{0}_{A})$ from a prior combining the ER model and an SBM, and set $V^{0}_{G}\gets Z^{0}_{G}=(V^{0}_{X},V^{0}_{Y},V^{0}_{A})$.
\For{$\ell=0,1,\ldots,L-1$}
    \State Update features by feature-to-feature and label-to-feature linear maps with standard or degree-based neighbor aggregation, $V^{\ell+1}_{X}\gets g_{X}(V^{\ell}_{X},V^{\ell}_{Y},V^{\ell}_{A})$; update labels by a feature-to-label linear map, a label residual, and label propagation, $V^{\ell+1}_{Y}\gets g_{Y}(V^{\ell}_{X},V^{\ell}_{Y},V^{\ell}_{A})$; update edges by a feature-similarity kernel, a label-similarity kernel or a Chung--Lu mechanism, and an adjacency residual, $V^{\ell+1}_{A}\gets g_{A}(V^{\ell}_{X},V^{\ell}_{Y},V^{\ell}_{A})$.
    \State Sample an activation $\sigma$, apply it to $V^{\ell+1}_{X}$, $V^{\ell+1}_{Y}$, and $V^{\ell+1}_{A}$, and inject Gaussian noise.
\EndFor
\State Collect every layer as $V_{X}=(V^{0}_{X},\ldots,V^{L}_{X})$, $V_{Y}=(V^{0}_{Y},\ldots,V^{L}_{Y})$, $V_{A}=(V^{0}_{A},\ldots,V^{L}_{A})$, and set $V_{G}=(V_{X},V_{Y},V_{A})$.
\State Designate a subset as observables and keep the rest latent: $X\subseteq V_{X}$, $Y\subseteq V_{Y}$, $A\subseteq V_{A}$.
\State \Return $X$, $Y$, $A$
\end{algorithmic}
\end{algorithm}

\section{Experimental Details}

\subsection{Datasets}\label{subsec:dataset}

\begin{table}
\centering
\caption{Statistics of the 23 test datasets. \#Edges counts each undirected edge once, with self-loops removed.}
\label{tab:dataset-statistics}
\begin{tabular}{l c c c c}
\toprule
Dataset & \#Nodes & \#Edges & \#Features & \#Classes\\
\midrule
Actor            & 7,600  & 26,659   & 932   & 5\\
AirBrazil        & 131    & 1,003    & 131   & 4\\
AirEU            & 399    & 5,993    & 399   & 4\\
AirUS            & 1,190  & 13,599   & 1,190 & 4\\
Amazon-Comp      & 13,752 & 245,861  & 767   & 10\\
Amazon-Photo     & 7,650  & 119,081  & 745   & 8\\
Amazon-Ratings   & 24,492 & 93,050   & 300   & 5\\
Chameleon        & 890    & 8,854    & 2,325 & 5\\
Citeseer         & 3,327  & 4,552    & 3,703 & 6\\
Coauthor CS      & 18,333 & 81,894   & 6,805 & 15\\
Coauthor Physics & 34,493 & 247,962  & 8,415 & 5\\
Cora             & 2,708  & 5,278    & 1,433 & 7\\
Cornell          & 183    & 277      & 1,703 & 5\\
DBLP             & 17,716 & 52,867   & 1,639 & 4\\
Deezer           & 28,281 & 92,752   & 128   & 2\\
Minesweeper      & 10,000 & 39,402   & 7     & 2\\
Pubmed           & 19,717 & 44,324   & 500   & 3\\
Questions        & 48,921 & 153,540  & 301   & 2\\
Squirrel         & 2,223  & 46,998   & 2,089 & 5\\
Texas            & 183    & 279      & 1,703 & 5\\
Tolokers         & 11,758 & 519,000  & 10    & 2\\
WikiCS           & 11,701 & 215,603  & 300   & 10\\
Wisconsin        & 251    & 450      & 1,703 & 5\\
\bottomrule
\end{tabular}
\end{table}

We evaluate our method on 23 datasets using the data splits of NodePFN~\cite{choi2026nodepfn}. For Chameleon and Squirrel, we use the filtered datasets from~\cite{platonov2023critical}. Dataset statistics are reported in Table~\ref{tab:dataset-statistics}.

\subsection{Implementation Details}\label{subsec:imple-detail}

\name~is a 9-layer transformer with embedding dimension 96. When node attributes are present, they are standardized over all nodes, clipped to $[-100,100]$, and reduced by a randomized SVD to 64 components. Each pre-training graph is drawn from the SCM prior. The node number $n$ is sampled uniformly from 100 to 2000, the number of features from 64 to 256, and the number of classes from 2 to 20. The graph is undirected, with the number of edges drawn uniformly from $n$ to $20n$. Nodes are randomly permuted, and the number of labeled training nodes is drawn uniformly from $10C$ to $\min(1000,\lfloor n/2\rfloor)$, where $C$ is the number of classes. The remaining nodes are the held-out targets. We minimize the mean cross-entropy of the 20-way predictive distribution on those held-out nodes.

\end{document}